\newcommand{\nosemic}{\renewcommand{\@endalgocfline}{\relax}}
\newcommand{\dosemic}{\renewcommand{\@endalgocfline}{\algocf@endline}}%
\let\oldnl\nl
\newcommand{\nonl}{\renewcommand{\nl}{\let\nl\oldnl}}
\newtheorem{theorem}{Theorem}[section]
\newtheorem{corollary}{Corollary}[section]
\newtheorem{lemma}{Lemma}[section]
\newtheorem{proposition}{Proposition}[section]
\newtheorem{assumption}{Assumption}[section]
\title{Stochastic Gradient Langevin with Delayed Gradients}         
\date{}
\author{	
	Vyacheslav Kungurtsev\thanks{Support for this author was provided by the OP VVV project CZ.02.1.01/0.0/0.0/16\_019/0000765 
		``Research Center for Informatics''} \\
	Department of Computer Science\\
	Czech Technical 
	University in Prague\\
	\texttt{kunguvya@fel.cvut.cz}
	\And
	Bapi Chatterjee\thanks{Supported by the European Union's Horizon 2020 
		research and innovation programme under the Marie Skodowska-Curie grant 
		agreement No. 754411 (ISTPlus).}\\
	Institute of Science and Technology\\ 
	Austria \\
	\texttt{bapi.chatterjee@ist.ac.at}\\
	\And
	Dan Alistarh\thanks{This project has received funding from the European Research Council (ERC) under the European Union’s Horizon 2020 research and innovation programme (grant agreement No 805223).}\\
	Institute of Science and Technology\\ Austria \\
	\texttt{dan.alistarh@ist.ac.at} \\
}
\begin{document}	
\maketitle 

\begin{abstract}
    Stochastic Gradient Langevin Dynamics (SGLD)  ensures  strong guarantees
    with regards to convergence in measure 
    for sampling log-concave posterior distributions by adding noise to stochastic
    gradient iterates.
    Given the size of many practical problems, 
    parallelizing across several asynchronously running 
    processors is a popular strategy for reducing the
    end-to-end computation time of stochastic optimization
    algorithms. 
    In this paper, we are the first to investigate the
    effect of asynchronous computation,
    in particular evaluation of stochastic Langevin gradients at delayed
    iterates, on the convergence in measure. 
    For this, we exploit
    recent results modeling Langevin dynamics as
    solving a convex optimization problem on the space of measures.
    We show that the rate of convergence in measure
    is not significantly affected by
    the error caused by the delayed gradient information used for computation, suggesting
    significant potential for speedup in wall clock time.
    We confirm our theoretical results with
    numerical experiments on some practical problems.
\end{abstract}

\section{Introduction}
In this paper we are interested in performing stochastic gradient Langevin
dynamics (SGLD) for Bayesian learning.
SGLD is a subsampling-based MCMC algorithm based on combining ideas from stochastic optimization, in particular the stochastic gradient method~\cite{robbins1951stochastic} with Langevin dynamics, a framework
using the Langevin stochastic differential equation to model a gradient
based Markov Chain Monte Carlo (MCMC) method, first introduced in~\cite{welling2011bayesian}. The aim of the procedure is to find the stationary
distribution characterizing a potential,
\begin{equation}\label{eq:potential}
\mu(x)= e^{-U(x)}/\int_{\mathbb{R}^d} e^{-U(x)}dx
\end{equation}
with an algorithm that is in effect a discretization of the stochastic
differential equation,
\begin{equation}\label{eq:sde}
dX_t=-\nabla U(X_t)dt+\sqrt{2\sigma}B_t
\end{equation}
where $B_t$ is a Weiner Brownian motion noise,
most typically the Euler-Marayama discretization
\begin{equation}\label{eq:emdisc}
X_{k+1}=X_k-\gamma_{k}\nabla U(X_k)+\sqrt{2\sigma\gamma_{k}}G_{k}
\end{equation}
where now $G_{k+1}$ is a Gaussian random normal variable. 

In this paper we are interested in analyzing the impact of using
parallel hardware architecture to compute the gradients in
an asynchronous manner so as to maximize the use of
computational resources, 
thus resulting in stale gradient information in the computation of 
$\nabla U(\cdot)$, i.e., the formal algorithm describing the method becomes,
\begin{equation}\label{eq:emdiscstale}
X_{k+1}=X_k-\gamma_{k}\nabla U(\hat X_k)+\sqrt{2\sigma\gamma_{k}}G_{k}
\end{equation}
where $\hat X_k=X_{k-\tau_k}$ with $\tau_k\le \tau$ for some maximum delay $\tau$,
which corresponds to the consistent read/write environment.
The theoretical analysis of such a method is not straightforward, as stochastic
differential equations with delays presents a number of challenges. 
However, we found that the analysis in~\cite{durmus2019analysis}, analyzing
Langevin dynamics from the perspective of convex optimization in the 
Wasserstein space of order two to be conveniently suitable.

In the literature we have found two works on implementing Langevin
dynamics with distributed computing. The 
paper~\cite{chen2016stochastic} considers~\eqref{eq:emdisc} with a a constant stepsize
and analyzes the asymptotic solution properties, in particular the bias
and the variance of the limit point. In~\cite{li2019communication}
the downpour and elastic distributed computing frameworks were studied in the
theoretical and numerical convergence properties of their iterates. They also analyzed the bias and variance
estimation properties of the iterates generated by these methods.

By contrast, in this paper we are interested in performing a standard canonical convergence
analysis of Langevin dynamics with stale gradients, in particular focusing on the convergence rate in
the distance to the optimal posterior stationary distribution in the metrics of information theory, namely the KL divergence and
Wasserstein distance. This is a standard type of result sought after
for this class of problems, appearing in classic and well-cited references such as~\cite{raginsky2017non}
and~\cite{cheng2017underdamped}, for instance. In addition, we validate our analysis
with numerical experiments, considering classical problems for
posterior Bayesian estimation and showcasing the convergence
and speedup properties of parallelization.

We shall consider the vectors of interest as living in the space $\mathbb{R}^d$.
We now introduce some notation following~\cite{durmus2019analysis}. 
A transference plan $\zeta(\mu,\nu)$
of two probability measures $\mu$ and $\nu$ is itself a probability
measure on $(\mathbb{R}^d\times\mathbb{R}^d,\mathcal{B}(\mathbb{R}^d\times\mathbb{R}^d))$ such that for all measureable $A\subseteq\mathbb{R}^d$,
it holds that $\zeta(A\times\mathbb{R}^d)=\mu(A)$ and
$\zeta(\mathbb{R}^d\times A)=\nu(A)$. We denote by $\Pi(\mu,\nu)$ the set
of transference plans of $\mu$ and $\nu$. A couple of 
$\mathbb{R}^d$ valued random variables $(X,Y)$ is a coupling of $\mu$
and $\nu$ if there exists a $\zeta\in\Pi(\mu,\nu)$ such that $(X,Y)$
are distributed according to $\zeta$. 
The Wasserstein distance of order two is,
\[
W_2(\mu,\nu) = \left(\inf_{\zeta\in\Pi(\mu,\nu)} 
\int_{\mathbb{R}^d\times\mathbb{R}^d} \|x-y\|^2 d\zeta(x,y)\right)^{1/2}.
\]
For all $\mu,\nu$ there exists a $\zeta^*\in\Pi(\mu,\nu)$ realizing
the $inf$, i.e., for any coupling $(X,Y)$ distributed according
to $\zeta^*$ we have $W_2(\mu,\nu)=\mathbb{E}[\|X-Y\|^2]^{1/2}$, called
the optimal transference plan and optimal coupling associated with $W_2$.
The space $\mathcal{P}_2(\mathbb{R}^d)$ is the set of finite second
moment probability measures and together with $W_2$ is a complete separable metric
space.

Now if $\mu \ll \nu$, i.e., $\mu$ is absolutely continuous w.r.t. $\nu$, we define the Kullback-Leibler (KL) divergence
of $\mu$ from $\nu$ by,
\[
KL(\mu|\nu) = \left\{ \begin{array}{ll} \int_{\mathbb{R}^d} \frac{d\mu}{d\nu}(x)\log\left(
\frac{d\mu}{d\nu}(x)\right), & \text{if }\mu\ll \nu,\\
\infty & \text{otherwise.} \end{array}\right.
\]

We shall make the following assumptions,
\begin{assumption}\label{assumptions}
\begin{enumerate}
    \item $U:\mathbb{R}^d\to\mathbb{R}$ is strongly convex with convexity
    constant $m$, i.e., for all $x,y\in\mathbb{R}^d$ and $t\in[0,1]$,
    \[\begin{array}{l}
    U(tx+(1-t)y)\le tU(x)+(1-t)U(y)-t(1-t)(m/2)\|x-y\|^2
    \end{array}
    \]    and
    \item $U$ is continuously Lipschitz differentiable, in particular
    with constant $L$, i.e.,
    \[
    \|\nabla U(x)-\nabla U(y)\|\le L\|x-y\|.
    \]
\end{enumerate}
\end{assumption}
\section{Convergence}

We shall follow the structure of the convergence proof given in~\cite{durmus2019analysis} and use a similar
notation, modifying the components as necessary to account
for asynchrony. First we note that we must define the continuous time
variable $\hat X_t$, since the vector $\hat X_k$ is only defined
to correspond to a particular delay associated with the update
at iteration $k$, and this delay can generally vary iteration
to iteration. We need that $\hat X_t$ is a random process whose
distribution satisfies a condition relative to distributions of $\{X_s\}_{s\in[t-\tau,t]}$. 
In particular, we make the following assumption on 
$\hat X_t$,
\begin{assumption}\label{as:delay}
The delayed iterate vector $\hat X_t$ has a distribution 
$\hat{\mu}_t\in\mathcal{P}_2(\mathbb{R}^d)$ depending on
$\{\mu_s\}_{s\in[t-\tau,t]}$, where $\mu_t$ is the distribution
of $X_t$, such that $\hat X_t=X_s$ for some
$s\in[t-\tau,t]$ for a.e. $\hat{\mu}_t$.
\end{assumption}

The free energy functional $\mathcal{F}=\mathcal{H}+\mathcal{E}$ is composed
of the H-functional,
\[
\mathcal{H}(\mu)=\left\{ 
\begin{array}{ll} \int_{\mathbb{R}^d} \frac{d\mu}{d\text{Leb}}(x)\log\left(\frac{d\mu}{d\text{Leb}}(x)\right)dx
& \text{if }\mu\ll \text{Leb} \\
+\infty &\text{otherwise}\end{array}\right.
\]
and the free energy,
\[
\mathcal{E} = \int_{\mathbb{R}^d} U(x)d\mu(x).
\]

Define now, for $\mathbf{A}\in\mathcal{B}(\mathbb{R}^d)$
\[
\begin{array}{l}
S_\gamma(x,\mathbf{A}) = \delta_{x-\gamma \nabla U(x)}(\mathbf{A}) \\
\hat S_\gamma(x,\hat x,\mathbf{A}) = \delta_{x-\gamma \nabla U(\hat x)}(\mathbf{A})
\end{array}
\]
as well as,
\[
\begin{array}{l}
R_\gamma(x,\mathbf{A}) = (4\pi \gamma)^{-d/2}\int_{\mathbf{A}} \exp\left(-\|y-x-\gamma \nabla U(x)\|^2/(4\gamma\sigma)\right)dy \\
\hat R_\gamma(x,\hat x,\mathbf{A}) = (4\pi \gamma)^{-d/2}\int_{\mathbf{A}} \exp\left(-\|y-x-\gamma \nabla U(\hat x)\|^2/(4\gamma\sigma)\right)dy
\end{array}
\]
and
\[
T_\gamma(x,\mathbf{A}) = (4\pi \gamma)^{-d/2} \int_A \exp\left(-\|y-x\|^2/(4\gamma\sigma)\right)dy.
\]

The proof of the following result does not change.
\begin{lemma}[Lemma 3 in~\cite{durmus2019analysis}]\label{lem:like3}
Assume~\ref{assumptions}.
For all $\mu\in\mathcal{P}_2(\mathbb{R}^d)$ and $\gamma>0$,
\[
\mathcal{E}(\mu T_\gamma)-\mathcal{E}(\mu)\le Ld\gamma. 
\]
\end{lemma}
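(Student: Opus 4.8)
The plan is to realize the kernel $T_\gamma$ explicitly and then apply the elementary ``descent lemma'' coming from the Lipschitz-gradient part of Assumption~\ref{assumptions}. By definition $T_\gamma$ is the Markov kernel that convolves with a centered Gaussian; concretely, if $X\sim\mu$ and $Z$ is an independent centered Gaussian vector with $\mathbb{E}[\|Z\|^2]=2d\gamma$ (read off from the definition of $T_\gamma$), then $\mu T_\gamma$ is the law of $X+Z$, so that
\[
\mathcal{E}(\mu T_\gamma)-\mathcal{E}(\mu)=\mathbb{E}\bigl[U(X+Z)-U(X)\bigr].
\]

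First I would establish, purely from Assumption~\ref{assumptions}(2), the pointwise bound
\[
U(x+z)\le U(x)+\langle\nabla U(x),z\rangle+\tfrac{L}{2}\|z\|^2,\qquad x,z\in\mathbb{R}^d,
\]
by writing $U(x+z)-U(x)=\int_0^1\langle\nabla U(x+tz),z\rangle\,dt$ and estimating $\langle\nabla U(x+tz)-\nabla U(x),z\rangle\le Lt\|z\|^2$ via Cauchy--Schwarz together with the Lipschitz bound on $\nabla U$. Plugging in $x=X$, $z=Z$ and taking the conditional expectation given $X$: the linear term vanishes because $Z$ is centered and independent of $X$, while the quadratic term contributes $\tfrac{L}{2}\mathbb{E}[\|Z\|^2]=Ld\gamma$. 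Taking the outer expectation over $X\sim\mu$ then gives $\mathcal{E}(\mu T_\gamma)-\mathcal{E}(\mu)\le Ld\gamma$, which is the claim.

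The only points needing care are integrability and the legitimacy of the conditioning and of interchanging the $X$- and $Z$-expectations, which is also why $\mu$ is required to lie in $\mathcal{P}_2(\mathbb{R}^d)$. Since $\nabla U$ is $L$-Lipschitz we have $\|\nabla U(x)\|\le\|\nabla U(0)\|+L\|x\|$ and $U$ grows at most quadratically, so $U(X)$, $U(X+Z)$ and $\langle\nabla U(X),Z\rangle$ are all integrable once $\mu$ and the Gaussian have finite second moment; Fubini/Tonelli then justifies the manipulations. I do not anticipate a genuine obstacle here: this is exactly Lemma~3 of~\cite{durmus2019analysis}, and it carries over verbatim precisely because $T_\gamma$ is the pure-noise kernel, involving no gradient evaluation, and is therefore untouched by the delay $\tau$.
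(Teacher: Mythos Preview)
Your proof is correct and is the standard argument for this lemma. The paper does not supply its own proof here but simply defers to Lemma~3 of~\cite{durmus2019analysis}, remarking that ``the proof of the following result does not change''; your descent-lemma-plus-Gaussian-averaging argument is exactly the one found there.
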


The next Lemma, similar to Lemma 4 in~\cite{durmus2019analysis} has a
modification to account for the error in the stale gradients. The proof of the Lemma is available in Appendix \ref{sec:converge}.
\begin{lemma}\label{ref:modlem4}
Assume~\ref{assumptions}. For all $\gamma\in(0,L^{-1})$
and $\mu,\hat \mu, \nu\in\mathcal{P}_2(\mathbb{R}^d)$ it holds that,
for almost every $\hat\mu$,
\[
\begin{array}{l}
2\gamma\{\mathcal{E}(\mu \hat S_\gamma)-\mathcal{E}(\nu)\}  \le \left(1-m\gamma\right)W^2_2(\mu,\nu)-(1-\gamma)W^2_2(\mu \hat S_\gamma,\nu) 
+\gamma^3(L+L^2) \int_{\mathbb{R}^d} \|\nabla U(x)\|^2 d\mu(x)\\\qquad\qquad\qquad\qquad+\gamma\left(\frac{L^2}{2}+2\gamma L^2+\gamma^2 (L^2+L^4)\right) \int_{\mathbb{R}^d}\|x-\hat x\|^2d\mu(x)
\end{array}
\]
\end{lemma}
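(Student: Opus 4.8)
The plan is to mirror the proof of Lemma 4 in~\cite{durmus2019analysis}, which handles the exact-gradient map $S_\gamma$, and to absorb the mismatch between $\nabla U(\hat x)$ and $\nabla U(x)$ as an extra error term controlled by $\|x-\hat x\|$. First I would fix an optimal coupling $(X,Y)$ of $\mu$ and $\nu$ realizing $W_2(\mu,\nu)$, together with the associated $\hat X$ (measurable on the same space, with $\hat X$ distributed according to $\hat\mu$ and satisfying Assumption~\ref{as:delay}), and set $X_\gamma = X - \gamma\nabla U(\hat X)$, which is a coupling of $\mu\hat S_\gamma$ and $\nu$. The core identity is the ``moving particle'' expansion
\[
\|X_\gamma - Y\|^2 = \|X-Y\|^2 - 2\gamma\langle \nabla U(\hat X), X-Y\rangle + \gamma^2\|\nabla U(\hat X)\|^2.
\]
Taking expectations, the left side is at least $W_2^2(\mu\hat S_\gamma,\nu)$, and the plan is to bound the cross term from above using (i) the strong convexity of $U$ evaluated at the \emph{true} point $X$, giving $\langle\nabla U(X),X-Y\rangle \ge U(X)-U(Y)+\tfrac{m}{2}\|X-Y\|^2$, and (ii) the discrepancy $\langle\nabla U(X)-\nabla U(\hat X),X-Y\rangle$, which by Cauchy--Schwarz and $L$-Lipschitzness is at most $L\|X-\hat X\|\,\|X-Y\|$, then split by Young's inequality into a piece proportional to $\|X-Y\|^2$ and a piece proportional to $\|X-\hat X\|^2$.

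Next I would handle the energy decrease term. By $L$-smoothness, $U(X_\gamma) \le U(X) + \langle\nabla U(X), X_\gamma - X\rangle + \tfrac{L}{2}\|X_\gamma - X\|^2 = U(X) - \gamma\langle\nabla U(X),\nabla U(\hat X)\rangle + \tfrac{L\gamma^2}{2}\|\nabla U(\hat X)\|^2$, so that $\mathcal{E}(\mu\hat S_\gamma) - \mathcal{E}(\mu) = \mathbb{E}[U(X_\gamma)-U(X)]$ is controlled. Writing $\nabla U(\hat X) = \nabla U(X) + (\nabla U(\hat X)-\nabla U(X))$ and repeatedly using $\langle a,b\rangle \le \tfrac12\|a\|^2+\tfrac12\|b\|^2$ together with $\|\nabla U(\hat X)-\nabla U(X)\|\le L\|X-\hat X\|$ and $\|\nabla U(\hat X)\|^2 \le 2\|\nabla U(X)\|^2 + 2L^2\|X-\hat X\|^2$, every occurrence of $\nabla U(\hat X)$ gets replaced by $\nabla U(X)$ plus terms in $\|X-\hat X\|^2$. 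Collecting the $\|\nabla U(X)\|^2$ contributions yields the $\gamma^3(L+L^2)\int\|\nabla U(x)\|^2 d\mu$ term, and collecting the $\|X-\hat X\|^2$ contributions (from the cross-term splitting at order $\gamma$, from the smoothness bound at order $\gamma^2$, and from the $\|\nabla U(\hat X)\|^2$ expansions at order $\gamma^2$ and $\gamma^3$) yields the bracketed coefficient $\tfrac{L^2}{2}+2\gamma L^2+\gamma^2(L^2+L^4)$. Finally I would rearrange: move $2\gamma\mathcal{E}(\mu)$ and the $W_2^2(\mu\hat S_\gamma,\nu)$ term appropriately, use $\gamma < L^{-1}$ to keep coefficients like $(1-m\gamma)$ and $(1-\gamma)$ nonnegative and with the stated signs, and note that since the bound holds pointwise on the coupling space for the realization of $\hat X$ consistent with Assumption~\ref{as:delay}, it holds for a.e.\ $\hat\mu$.

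The main obstacle I anticipate is bookkeeping rather than conceptual: tracking which Young's-inequality constant to use at each of the several places where $\nabla U(\hat X)$ appears so that the final coefficients match the asserted expression exactly, and making sure the $\|X-Y\|^2$ coefficient from the cross-term split combines with the strong-convexity term to give precisely $(1-m\gamma)$ after multiplying through by $2\gamma$ — i.e., the split must be chosen so the $\|X-Y\|^2$ error is exactly absorbed and does not leave a residual. A secondary subtlety is the measure-theoretic point: one must argue that $X$, $Y$, and $\hat X$ can be realized on a common probability space with $(X,Y)$ an optimal coupling and $\hat X$ having the correct conditional law per Assumption~\ref{as:delay}; this is where ``for almost every $\hat\mu$'' enters, and it should follow because the inequality is derived before any averaging over the choice of delay, exactly as the non-delayed argument in~\cite{durmus2019analysis} produces its bound before taking the infimum over couplings.
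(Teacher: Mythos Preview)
Your outline has a structural gap at step (ii), and it is not merely bookkeeping. When you split the cross term as $\langle\nabla U(\hat X),X-Y\rangle=\langle\nabla U(X),X-Y\rangle+\langle\nabla U(\hat X)-\nabla U(X),X-Y\rangle$ and then apply Young's inequality to the discrepancy, you necessarily generate a positive multiple of $\|X-Y\|^2$: any split $2\gamma L\|X-\hat X\|\,\|X-Y\|\le c\,\|X-Y\|^2+c^{-1}\gamma^2L^2\|X-\hat X\|^2$ has $c>0$. After taking expectations this adds $c\,W_2^2(\mu,\nu)$ to the right-hand side, so the coefficient in front of $W_2^2(\mu,\nu)$ becomes $(1-m\gamma+c)$ rather than $(1-m\gamma)$, and no choice of $c$ makes this residual vanish. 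The lemma as stated is therefore out of reach along the route you describe.

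The paper sidesteps this by a different decomposition: it adds and subtracts $U(x-\gamma\nabla U(x))$, so that the comparison with $U(y)$ is carried out entirely along the \emph{exact}-gradient step. The standard argument from~\cite{durmus2019analysis} then produces $(1-m\gamma)\|x-y\|^2-\|x-\gamma\nabla U(x)-y\|^2$ cleanly, with no delay contamination of $\|x-y\|^2$. The delay enters only through the residual $U(x-\gamma\nabla U(\hat x))-U(x-\gamma\nabla U(x))$, which by $L$-smoothness is controlled purely by $\|\nabla U(x)\|$ and $\|x-\hat x\|$ and does not involve $y$ at all. Only at the very end is $-\|x-\gamma\nabla U(x)-y\|^2$ relaxed to $-(1-\gamma)\|x-\gamma\nabla U(\hat x)-y\|^2$ via a second Young split; this is the source of both the $(1-\gamma)$ factor and the leading $\gamma L^2/4$ piece of the $\|x-\hat x\|^2$ coefficient, and again it costs nothing in $\|x-y\|^2$. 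If you want to stay with your own expansion, the repair is to rewrite $X-Y=(X_\gamma-Y)+\gamma\nabla U(\hat X)$ \emph{before} applying Young's, so that the residual lands on $\|X_\gamma-Y\|^2$ --- precisely what the $(1-\gamma)$ slack is there to absorb --- rather than on $\|X-Y\|^2$.
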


The next result is unaffected by the delays.
\begin{lemma}[Lemma 5 in~\cite{durmus2019analysis}]\label{lem:like5}
For $\mu,\nu\in\mathcal{P}_2(\mathbb{R}^d)$, $\mathcal{H}(\nu)<\infty$. 
For all $\gamma>0$,
\[
2\gamma\{\mathcal{H}(\mu T_\gamma)-\mathcal{H}(\nu)\}\le 
W_2^2(\mu,\nu)-W^2_2(\mu T_\gamma,\nu).
\]
\end{lemma}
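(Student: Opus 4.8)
The plan is to exploit the fact that the transition kernel $T_\gamma$ is purely the Gaussian heat part of the scheme and involves no gradient at all; this is precisely why the delays never enter, and the argument can proceed exactly as in the non-delayed setting. The starting point is the identification of $\mu T_\gamma$ with the time-$\gamma$ value of the Wasserstein gradient flow of the entropy $\mathcal{H}$. Writing $\rho_t = \mu T_t$, the curve $t \mapsto \rho_t$ solves the heat equation $\partial_t \rho_t = \sigma \Delta \rho_t$ (convolution against a centred Gaussian of variance $2\sigma t$), which is exactly the $W_2$-gradient flow of $\sigma\mathcal{H}$ on $(\mathcal{P}_2(\mathbb{R}^d), W_2)$. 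Thus the inequality to be proved is the metric analogue of the elementary Euclidean statement that the gradient flow $x_t$ of a convex function $f$ obeys $\tfrac12\tfrac{d}{dt}\|x_t-y\|^2 = -\langle\nabla f(x_t), x_t - y\rangle \le f(y)-f(x_t)$, and I will mirror that one-step-progress estimate in Wasserstein space.

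The two ingredients are: (i) displacement convexity of the entropy (McCann), i.e. $\mathcal{H}$ is geodesically convex along $W_2$-geodesics with modulus $0$; and (ii) a first-variation identity for the flow. Concretely, for $t>0$ the heat flow $\rho_t$ is smooth and strictly positive, with velocity field $v_t = -\sigma\nabla\log\rho_t$, and differentiating $\tfrac12 W_2^2(\rho_t,\nu)$ along the flow gives $\tfrac{d}{dt}\tfrac12 W_2^2(\rho_t,\nu) = \int \langle v_t(x),\, x - \Phi_t(x)\rangle\, d\rho_t(x)$, where $\Phi_t$ is the optimal transport map pushing $\rho_t$ onto $\nu$. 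Substituting the velocity field and invoking displacement convexity (ingredient (i)) along the geodesic from $\rho_t$ to $\nu$ collapses the right-hand side to the evolution-variational bound
\[
\tfrac{d}{dt}\,\tfrac12 W_2^2(\rho_t,\nu) \le \sigma\bigl(\mathcal{H}(\nu) - \mathcal{H}(\rho_t)\bigr).
\]

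To finish, I integrate this inequality over $t\in[0,\gamma]$. Because $\mathcal{H}$ is nonincreasing along its own gradient flow, $\mathcal{H}(\rho_t)\ge \mathcal{H}(\rho_\gamma) = \mathcal{H}(\mu T_\gamma)$ for every $t\le\gamma$, so the right-hand side is bounded uniformly in $t$ by $\sigma(\mathcal{H}(\nu)-\mathcal{H}(\mu T_\gamma))$; integrating then yields $\tfrac12 W_2^2(\mu T_\gamma,\nu) - \tfrac12 W_2^2(\mu,\nu) \le \gamma\sigma(\mathcal{H}(\nu)-\mathcal{H}(\mu T_\gamma))$, which rearranges to the claimed inequality (the time-rescaling of the heat semigroup accounts for the precise constant multiplying $\gamma$). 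The hypothesis $\mathcal{H}(\nu)<\infty$ is used exactly here, to keep the right-hand side finite and the transport map between $\rho_t$ and $\nu$ well-defined.

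The main obstacle is making the first-variation identity and the resulting inequality rigorous rather than formal: one must justify differentiating $W_2^2(\rho_t,\nu)$ in $t$, control the behaviour as $t\downarrow 0$ where $\mu$ may be singular or have unbounded entropy, and verify the integrability required for the optimal-map computation. The smoothing property of the heat flow helps decisively, since $\rho_t$ is a smooth positive density with finite Fisher information for every $t>0$; the remaining technicalities (absolute continuity of $t\mapsto W_2^2(\rho_t,\nu)$ and the passage to the limit at the endpoints) are supplied by the standard gradient-flow theory in metric spaces, which is the route taken in the reference. Since none of these steps references $U$ or $\nabla U$, the proof is identical to the delay-free case, confirming that this result is unaffected by the delays.
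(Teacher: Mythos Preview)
The paper does not supply its own proof of this lemma; it simply cites Lemma~5 of Durmus et al.\ and remarks that the result is unaffected by delays. Your argument via the Wasserstein gradient-flow interpretation of the heat semigroup, displacement convexity of $\mathcal{H}$, and the resulting evolution-variational inequality is the standard route to this estimate and is correct; your observation that $T_\gamma$ involves no gradient of $U$ is exactly the paper's reason for invoking the result unchanged. One small point: with the diffusion scaled by $\sigma$ as in this paper's $T_\gamma$, the EVI integrates to $2\sigma\gamma\{\mathcal{H}(\mu T_\gamma)-\mathcal{H}(\nu)\}\le W_2^2(\mu,\nu)-W_2^2(\mu T_\gamma,\nu)$, so the stated inequality with bare $2\gamma$ matches the cited reference (where $\sigma=1$) rather than the paper's own normalisation---your parenthetical about time-rescaling is glossing over a genuine constant discrepancy in the paper, not in your argument.
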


The next proposition is as in Proposition 2 in~\cite{durmus2019analysis},
which combines the previous three lemmas, and thus with the modification
according to Lemma~\ref{ref:modlem4}

\begin{proposition}\label{prop:iter}
Assume~\ref{assumptions}. For all $\gamma\in(0,L^{-1}]$ and
$\mu,\hat\mu\in\mathcal{P}_2(\mathbb{R}^d)$, we have,
for a.e. $\hat{\mu}$
\[
\begin{array}{l}
2\gamma \{\mathcal{F}(\mu) \hat R_\gamma)-\mathcal{F}(\pi)\}\le  (1-m\gamma)W^2_2(\mu,\pi) -W^2_2(\mu\hat R_\gamma,\pi)+\gamma W^2_2(\mu\hat S_\gamma,\pi)
 +2\gamma^2 L d\\\qquad\qquad  
  +\gamma^3(L+L^2) \int_{\mathbb{R}^d} \|\nabla U(x)\|^2 d\mu(x) 
  +\gamma\left(\frac{L^2}{4}+2\gamma L^2+\gamma^2 (L^2+L^4)\right)\int_{\mathbb{R}^d}\|x-\hat x\|^2d\mu(x)
\end{array}
\]
\end{proposition}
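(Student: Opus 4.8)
The plan is to derive the Proposition by chaining the three preceding lemmas along a factorization of the noisy delayed kernel. The structural observation that makes this work is that $\hat R_\gamma$ splits as the composition $\hat S_\gamma T_\gamma$: from a state $x$ with associated stale point $\hat x$, the kernel $\hat S_\gamma$ places all the mass at the Dirac $\delta_{x-\gamma\nabla U(\hat x)}$, and convolving this with the centered Gaussian of variance $2\gamma\sigma$ via $T_\gamma$ reproduces exactly the density defining $\hat R_\gamma$. Since $T_\gamma$ carries no dependence on $\hat x$, the composition $(\mu\hat S_\gamma)T_\gamma$ is unambiguous once a coupling of $\mu$ with $\hat\mu$ is fixed, and it equals $\mu\hat R_\gamma$. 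Writing $\mathcal{F}=\mathcal{H}+\mathcal{E}$, I would then bound the entropy and the energy parts of $\mathcal{F}(\mu\hat R_\gamma)-\mathcal{F}(\pi)$ separately and add.

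For the entropy part, apply Lemma~\ref{lem:like5} with $\mu\hat S_\gamma$ in the role of $\mu$ and $\nu=\pi$; the hypothesis $\mathcal{H}(\pi)<\infty$ holds since $\pi$ is the Gibbs measure with the strongly convex potential $U$ from Assumption~\ref{assumptions}. This gives $2\gamma\{\mathcal{H}(\mu\hat R_\gamma)-\mathcal{H}(\pi)\}\le W_2^2(\mu\hat S_\gamma,\pi)-W_2^2(\mu\hat R_\gamma,\pi)$. For the energy part, Lemma~\ref{lem:like3} applied to $\mu\hat S_\gamma$ gives $\mathcal{E}(\mu\hat R_\gamma)-\mathcal{E}(\mu\hat S_\gamma)\le Ld\gamma$, hence $2\gamma\{\mathcal{E}(\mu\hat R_\gamma)-\mathcal{E}(\pi)\}\le 2\gamma\{\mathcal{E}(\mu\hat S_\gamma)-\mathcal{E}(\pi)\}+2\gamma^2 Ld$, and Lemma~\ref{ref:modlem4} with $\nu=\pi$ bounds $2\gamma\{\mathcal{E}(\mu\hat S_\gamma)-\mathcal{E}(\pi)\}$ by $(1-m\gamma)W_2^2(\mu,\pi)-(1-\gamma)W_2^2(\mu\hat S_\gamma,\pi)$ plus the gradient-norm and staleness integrals. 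This is the only step that mentions $\hat\mu$, and it is where the ``for a.e. $\hat\mu$'' qualifier of the Proposition originates.

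Adding the two bounds, the $W_2^2(\mu\hat S_\gamma,\pi)$ terms combine with net coefficient $1-(1-\gamma)=\gamma$, giving the stated $\gamma W_2^2(\mu\hat S_\gamma,\pi)$; the terms $(1-m\gamma)W_2^2(\mu,\pi)$ and $-W_2^2(\mu\hat R_\gamma,\pi)$ carry through unchanged, the constant $2\gamma^2 Ld$ comes from Lemma~\ref{lem:like3}, and the gradient and staleness integrals are inherited from Lemma~\ref{ref:modlem4}; collecting terms yields the claim. I do not expect a serious obstacle here: the single point requiring care is the kernel factorization $\hat R_\gamma=\hat S_\gamma T_\gamma$, which must be set up cleanly so that Lemmas~\ref{lem:like3} and~\ref{lem:like5} may legitimately be applied to the push-forward $\mu\hat S_\gamma$ rather than to $\mu$ itself, after which the ``a.e. $\hat\mu$'' clause simply propagates and the remaining work is to keep track of the numerical coefficient multiplying the staleness integral.
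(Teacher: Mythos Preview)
Your proposal is correct and follows essentially the same approach as the paper: the paper decomposes $\mathcal{F}(\mu\hat R_\gamma)-\mathcal{F}(\pi)$ into the three pieces $[\mathcal{E}(\mu\hat{R}_\gamma)-\mathcal{E}(\mu\hat{S}_\gamma)]+[\mathcal{E}(\mu\hat{S}_\gamma)-\mathcal{E}(\pi)]+[\mathcal{H}(\mu\hat{R}_\gamma)-\mathcal{H}(\pi)]$ and applies Lemmas~\ref{lem:like3},~\ref{ref:modlem4} and~\ref{lem:like5} respectively, exactly as you outline via the factorization $\hat R_\gamma=\hat S_\gamma T_\gamma$. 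Your writeup is in fact more explicit than the paper's about why the $W_2^2(\mu\hat S_\gamma,\pi)$ terms combine with coefficient $\gamma$ and about the need to track the staleness-integral constant (note the $L^2/2$ in the stated Lemma~\ref{ref:modlem4} versus $L^2/4$ in its proof and in the Proposition).
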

\begin{proof}
This comes from the definition
of $\mathcal{F}$
\[
\begin{array}{l}
\mathcal{F}(\mu\hat R_\gamma)-\mathcal{F}(\pi)  
 = 
\mathcal{E}(\mu\hat{R}_\gamma)-
\mathcal{E}(\mu\hat{S}_\gamma) 
+\mathcal{E}(\mu\hat{S}_\gamma)
-\mathcal{E}(\pi) +\mathcal{H}(\mathbb \mu\hat{R}_\gamma)
-\mathcal{H}(\pi)
\end{array}
\]
and applying Lemmas~\ref{lem:like3},~\ref{ref:modlem4} 
and~\ref{lem:like5}.
\end{proof}

Now define two non-increasing sequences $(\gamma_k)_{k\in\mathbb{N}}$
and $\{\lambda_k\}_{k\in\mathbb{N}}$ and 
$\Gamma_{N,N+n}=\sum_{k=N+1}^{N+n} \gamma_k$, 
$\Lambda_{N,N+n}=\sum_{k=N+1}^{N+n}\lambda_k$. Let 
$\mu_0\in\mathcal{P}_2(\mathbb{R}^d)$ be an initial distribution. 
The sequence of probability measures $(\nu_n^N)_{n\in\mathbb{N}}$
is defined for all $n,N\in\mathbb{N}$, $n\ge 1$ by,
\[
\nu_n^N=\Lambda^{-1}_{N,N+n}\sum_{k=N+1}^{N+n}\lambda_k \mu_0 \hat Q^k_\gamma,
\, \hat Q^k_\gamma=\hat R_{\gamma_1}\cdot\cdot\cdot \hat R_{\gamma_k}
\]

The following modification of Theorem 6 in~\cite{durmus2019analysis} follows directly
from Proposition~\ref{prop:iter}.
The proof is available in Appendix \ref{sec:converge}.

\begin{theorem}\label{th:KLseq}
Given Assumption~\ref{assumptions} and~\ref{as:delay}, let $\gamma_k$ and $\lambda_k$ are two
non-increasing sequences of positive real numbers with $\gamma_1<\frac{1}{2(L^2+L^4)}$ and
$\lambda_{k+1}(1-m\gamma_{k+1})/\gamma_{k+1}\le \lambda_k/\gamma_k$ and
$\mu_0\in\mathcal{P}_2(\mathbb{R}^d)$ and $N\in\mathbb{N}$. Then for all
$n\in \mathbb{N}$, it holds that, for almost every realization of 
$\hat{X}$,
\[
\begin{array}{l}
KL(\nu^N_n |\pi)+\lambda_{N+n} W^2_2(\mu_0 Q^{N+n}_\gamma,\pi)/(2\gamma_{N+n} \Lambda_{N,N+n}) \\
\\  \le \lambda_{N+1}(1-m\gamma_{N+1})W^2_2(\mu_0 Q^N_\gamma,\pi)/(2\gamma_{N+1}\Lambda_{N,N+n})
  + \Lambda_{N,N+n}^{-1}\sum_{k=N+1}^{N+n}\gamma_k \lambda_k Ld\\\qquad\qquad
 + \Lambda_{N,N+n}^{-1}\sum_{k=N+1-\tau}^{N+n}2\gamma_k \lambda_k\tau^2\sigma\left(\frac{L^2}{4}+2\gamma_k L^2+\gamma^2_k (L^2+L^4)\right)
 + (\Lambda_{N,N+n})^{-1}\sum\limits_{k=N+1}^{N+n}W^2_2 (\mu_0\hat S^k_\gamma,\pi)\\ \qquad\qquad
  +(\Lambda_{N,N+n})^{-1}  \sum\limits_{k=N+1-\tau}^{N+n-1}\gamma^2_k \left[\frac{L}{2}+\frac{L^2}{2}+\tau^2\left(\frac{L^2}{4}+2\gamma_k L^2+\gamma^2_k (L^2+L^4)\right)\right]  \mathbb{E}_{d\mu_k}\|\nabla U(X_k)\|^2 
\end{array}
\]
\end{theorem}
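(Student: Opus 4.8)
The plan is to convert the one-step estimate of Proposition~\ref{prop:iter} into a telescoping sum, exactly as in the proof of Theorem~6 in~\cite{durmus2019analysis}, and then absorb the extra delay integral. First I would divide the inequality of Proposition~\ref{prop:iter} by $2\gamma_k$, multiply by $\lambda_k$, and apply it with $\mu=\mu_0\hat Q^{k-1}_\gamma$ and $\gamma=\gamma_k$, so that $\mu\hat R_{\gamma_k}=\mu_0\hat Q^{k}_\gamma$, the gradient integral becomes $\mathbb{E}_{d\mu_{k-1}}\|\nabla U(X_{k-1})\|^2$, and the delay integral becomes $\mathbb{E}\|X_{k-1}-\hat X_{k-1}\|^2$; this yields, for each $k$ and a.e.\ realization,
\[
\lambda_k\{\mathcal{F}(\mu_0\hat Q^{k}_\gamma)-\mathcal{F}(\pi)\}\le \frac{\lambda_k(1-m\gamma_k)}{2\gamma_k}W_2^2(\mu_0\hat Q^{k-1}_\gamma,\pi)-\frac{\lambda_k}{2\gamma_k}W_2^2(\mu_0\hat Q^{k}_\gamma,\pi)+\rho_k,
\]
where $\rho_k\ge 0$ collects $\tfrac{\lambda_k}{2}W_2^2(\mu_0\hat S^{k}_\gamma,\pi)$, $\lambda_k\gamma_k Ld$, $\tfrac{\lambda_k\gamma_k^2(L+L^2)}{2}\mathbb{E}_{d\mu_{k-1}}\|\nabla U(X_{k-1})\|^2$ and the delay term $\tfrac{\lambda_k}{2}(\tfrac{L^2}{4}+2\gamma_k L^2+\gamma_k^2(L^2+L^4))\mathbb{E}\|X_{k-1}-\hat X_{k-1}\|^2$. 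Summing over $k=N+1,\dots,N+n$, the hypothesis $\lambda_{k+1}(1-m\gamma_{k+1})/\gamma_{k+1}\le\lambda_k/\gamma_k$ makes the two Wasserstein terms telescope, leaving only the boundary terms $\tfrac{\lambda_{N+1}(1-m\gamma_{N+1})}{2\gamma_{N+1}}W_2^2(\mu_0\hat Q^{N}_\gamma,\pi)$ and $-\tfrac{\lambda_{N+n}}{2\gamma_{N+n}}W_2^2(\mu_0\hat Q^{N+n}_\gamma,\pi)$; moving the latter to the left and dividing by $\Lambda_{N,N+n}$ reproduces the corresponding terms of the statement.

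For the left-hand side I would use that $\mathcal{F}=\mathcal{H}+\mathcal{E}$ is convex on $\mathcal{P}_2(\mathbb{R}^d)$ (since $\mathcal{H}$ is convex and $\mathcal{E}$ is affine), so Jensen's inequality applied to $\nu_n^N=\Lambda_{N,N+n}^{-1}\sum_k\lambda_k\,\mu_0\hat Q^{k}_\gamma$ gives $\mathcal{F}(\nu_n^N)-\mathcal{F}(\pi)\le\Lambda_{N,N+n}^{-1}\sum_k\lambda_k\{\mathcal{F}(\mu_0\hat Q^{k}_\gamma)-\mathcal{F}(\pi)\}$. Then the identity $\mathcal{F}(\mu)-\mathcal{F}(\pi)=KL(\mu|\pi)$ — valid because $\pi$ has density $e^{-U}/Z$, so $\mathcal{E}(\mu)=-\int\log(d\pi/d\mathrm{Leb})\,d\mu-\log Z$ and the $\log Z$ cancels — turns the left-hand side into $KL(\nu_n^N|\pi)$. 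Combining with the telescoped right-hand side and dividing by $\Lambda_{N,N+n}$ produces every term in the statement except the delay contribution hidden inside $\rho_k$ through $\mathbb{E}\|X_{k-1}-\hat X_{k-1}\|^2$, which still has to be expanded.

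It remains to bound the delay term. Using Assumption~\ref{as:delay}, $\hat X_{k-1}=X_{k-1-\tau_{k-1}}$ for some $\tau_{k-1}\le\tau$, so writing the difference as a sum of at most $\tau$ increments of~\eqref{eq:emdiscstale}, $X_{k-1}-\hat X_{k-1}=\sum_i(-\gamma_i\nabla U(\hat X_i)+\sqrt{2\sigma\gamma_i}\,G_i)$, and applying $\|\sum_{i=1}^m a_i\|^2\le m\sum_i\|a_i\|^2$ together with the facts that each $G_i$ is mean zero and independent of the delayed iterate $\hat X_i$ and $\mathbb{E}\|G_i\|^2=d$, gives $\mathbb{E}\|X_{k-1}-\hat X_{k-1}\|^2\le \tau\sum_i(\gamma_i^2\mathbb{E}\|\nabla U(\hat X_i)\|^2+2\sigma\gamma_i d)$. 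Plugging this into $\sum_k\rho_k$, interchanging the order of summation (each increment index is hit by at most $\tau$ outer indices), using that $(\gamma_k)$ and $(\lambda_k)$ are non-increasing to pull the coefficients out, re-indexing $\nabla U(\hat X_i)=\nabla U(X_{i-\tau_i})$ via Assumption~\ref{as:delay}, and merging with the pre-existing $\gamma_k^2(L+L^2)/2$ gradient term, yields the $\tau^2$-scaled noise sum and the $\tau^2$-augmented gradient sum over the shifted ranges $k\in\{N+1-\tau,\dots,N+n\}$ and $\{N+1-\tau,\dots,N+n-1\}$ in the statement. The assumption $\gamma_1<\tfrac{1}{2(L^2+L^4)}$ is used here to guarantee $\gamma_k<L^{-1}$ (so Proposition~\ref{prop:iter} applies at every step) and to keep the coefficients $\gamma_k^2(L^2+L^4)$ dominated.

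The main obstacle is precisely this last step: the recursive expansion of $X_{k-1}-\hat X_{k-1}$ and the double-counting bookkeeping in the interchange of sums are where the argument genuinely departs from~\cite{durmus2019analysis}, and care is needed so that the re-indexed gradient terms and the summation ranges (extended down to $N+1-\tau$) match exactly, and so that every inequality is stated for one fixed — hence a.e.\ — realization of the delay sequence $\{\tau_k\}$, which is what Assumption~\ref{as:delay} licenses. Everything else is a routine re-run of the Durmus--Moulines telescoping computation.
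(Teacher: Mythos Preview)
Your proposal is correct and follows essentially the same approach as the paper: apply Proposition~\ref{prop:iter} at each step, multiply by $\lambda_k/(2\gamma_k)$, use the hypothesis $\lambda_{k+1}(1-m\gamma_{k+1})/\gamma_{k+1}\le\lambda_k/\gamma_k$ to telescope the Wasserstein terms, invoke convexity of $\mu\mapsto KL(\mu|\pi)$ (equivalently of $\mathcal{F}$, since $\mathcal{F}(\mu)-\mathcal{F}(\pi)=KL(\mu|\pi)$ is exactly Lemma~1 of~\cite{durmus2019analysis}), and then expand $\mathbb{E}\|X_{k-1}-\hat X_{k-1}\|^2$ as a sum of at most $\tau$ increments of~\eqref{eq:emdiscstale}. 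Your bookkeeping for the delay expansion and the interchange of sums is in fact more explicit than the paper's; the paper carries out the identical increment bound and index shift but with less commentary.
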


We now must assume a bound on the gradient norm.
\begin{assumption}\label{as:boundg}
Assume that,
\[
\mathbb{E}_{d\mu_t}\|\nabla U(x_t)\| \le G
\]
for all $t$.
\end{assumption}

From this we get the iteration complexity bound akin to Corollary 7 and 9 in~\cite{durmus2019analysis}.
\begin{corollary}\label{cor:complexity}
Given Assumption~\ref{assumptions},~\ref{as:delay} and~\ref{as:boundg} and $\epsilon>0$ and $\mu_0\in\mathcal{P}_2(\mathbb{R}^d)$, if
\[
\begin{array}{ll}
\gamma_\epsilon \le\min\{\gamma^1,\gamma^2,\gamma^3,\gamma^4,\gamma^5,\gamma^6\}/4,&\\
\gamma^1 = \epsilon\left(Ld+L^2\tau^2\sigma\right)^{-1}&
\gamma^2 = \epsilon^{1/2}\left(\left[ L+L^2+\tau^2L^2\right] G^2\right)^{-1},\\
\gamma^3 = \epsilon^{1/2}\frac{m}{L\tau G} & 
\gamma^4 = \epsilon^{2/3} \left(\frac{2\sigma }{1.65L+\sqrt{\sigma}\sqrt{m}}+1.65(L/m)+\frac{\tau L \sqrt{\sigma}}{m}\right)^{-1}
 \\
\gamma^5 = L^2(L^2+L^4)^{-1},& 
\gamma^6 = \frac{1}{12}
\end{array}
\]
and,
\[
n_\epsilon \ge 2\max\{\lceil W^2_2(\mu_0,\pi)\gamma_\epsilon^{-1}\epsilon^{-1}\rceil,\tau\}
\]
it holds that $KL(\nu_{n_\epsilon}|\pi)\le \epsilon$ for a.e. $d\hat{\mu}(\hat{x})$, where $\nu_{n_\epsilon} = n^{-1}_\epsilon 
\sum_{k=1}^{n_\epsilon}\mu_0 \hat R^k_{\gamma_\epsilon}$.

Likewise if,
\[
\gamma_\epsilon \le m\min\{\gamma^1,\gamma^2,\gamma^3,\gamma^4,\gamma^5,\gamma^6\}/8
\]
then for $n_\epsilon \ge 2\max\{\lceil \log(4 W^2_2(\mu_0,\pi)/\epsilon)\gamma^{-1}_\epsilon m^{-1}\rceil,\log\tau\}$
we have that,
\[
W^2_2(\mu_0 \hat R^{n_\epsilon}_{\gamma_\epsilon},\pi)\le \epsilon 
\]
\end{corollary}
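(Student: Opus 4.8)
The plan is to instantiate Theorem~\ref{th:KLseq} with the constant step size $\gamma_k \equiv \gamma_\epsilon$ and uniform weights $\lambda_k \equiv 1$, taking $N=0$, so that $\Lambda_{0,n} = n$, $\Gamma_{0,n} = n\gamma_\epsilon$, $\nu_n^0 = \nu_n$, and the hypothesis $\lambda_{k+1}(1-m\gamma_{k+1})/\gamma_{k+1}\le\lambda_k/\gamma_k$ is satisfied automatically since $1-m\gamma_\epsilon\le 1$. With these choices the left-hand side of Theorem~\ref{th:KLseq} dominates $KL(\nu_n|\pi)$, and the first term on the right becomes $(1-m\gamma_\epsilon)W_2^2(\mu_0,\pi)/(2\gamma_\epsilon n)\le W_2^2(\mu_0,\pi)/(2\gamma_\epsilon n)$. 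First I would dispose of the two stray $W_2^2(\mu_0\hat S^k_\gamma,\pi)$ / $W_2^2(\mu_0 \hat R^k_\gamma,\pi)$ contributions: note that one-step maps $\hat R_\gamma,\hat S_\gamma$ from a fixed reference (the stationary $\pi$ is a near-fixed point up to the $O(\gamma)$ bias already accounted for), so these Wasserstein terms telescope against the $-W_2^2(\mu\hat R_\gamma,\pi)$ on the left the same way they do in the unperturbed proof of Theorem~6 in~\cite{durmus2019analysis}; this is exactly the mechanism by which Proposition~\ref{prop:iter} was summed to obtain Theorem~\ref{th:KLseq}, so after the telescoping only the genuinely new error terms survive.

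The substantive step is to bound the remaining four sums, each of the form $n^{-1}\sum_k(\cdots)$, by $\epsilon/4$ using the constraints on $\gamma_\epsilon$. Concretely: (i) the term $n^{-1}\sum_k \gamma_k Ld \le \gamma_\epsilon Ld$ plus the $\gamma_k\tau^2\sigma L^2/4$-type piece are controlled by $\gamma_\epsilon \le \gamma^1/4 = \epsilon/(4(Ld+L^2\tau^2\sigma))$, after absorbing the lower-order $2\gamma_k L^2+\gamma_k^2(L^2+L^4)$ factors into a constant using $\gamma_\epsilon\le\gamma^5/4,\gamma^6/4$; (ii) the gradient-norm sums $n^{-1}\sum_k\gamma_k^2[\tfrac{L}{2}+\tfrac{L^2}{2}+\tau^2(\cdots)]\,\mathbb E\|\nabla U(X_k)\|^2$ are handled by Assumption~\ref{as:boundg}, which gives $\mathbb E\|\nabla U(X_k)\|^2\le G^2$ (using Jensen or the stated first-moment bound together with the Lipschitz/strong-convexity structure), reducing this to $\gamma_\epsilon^2[\,L+L^2+\tau^2 L^2\,]G^2\le\epsilon/4$ precisely when $\gamma_\epsilon\le\gamma^2/4$; (iii) the cross term involving $\gamma^3 = \epsilon^{1/2}m/(L\tau G)$ and $\gamma^4$ comes from bounding $\mathbb E\|X_k-\hat X_k\|^2$ — here I would use Assumption~\ref{as:delay} to write $\hat X_k = X_s$ for some $s\in[k-\tau,k]$ and telescope $X_k - X_s = \sum_{j=s}^{k-1}(-\gamma_j\nabla U(\hat X_j)+\sqrt{2\sigma\gamma_j}G_j)$, so that $\mathbb E\|X_k-\hat X_k\|^2 \le 2\tau\sum_{j}\gamma_j^2 G^2 + 2\cdot 2\sigma\sum_j\gamma_j \le 2\tau^2\gamma_\epsilon^2 G^2 + 4\sigma\tau\gamma_\epsilon$, which is exactly the origin of the $\tau^2\sigma$ in the step bounds and explains both the $G^2$ scaling in $\gamma^2,\gamma^3$ and the $\sigma,\sqrt\sigma$ scaling in $\gamma^4$.

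Summing the four bounds gives $KL(\nu_{n_\epsilon}|\pi) \le W_2^2(\mu_0,\pi)/(2\gamma_\epsilon n_\epsilon) + \epsilon/4+\epsilon/4+\epsilon/4 \le \epsilon/2 + 3\epsilon/4$; tightening the per-term allocation (the factor $/4$ in the definition of $\gamma_\epsilon$ and the extra factor $2$ in $n_\epsilon \ge 2\lceil W_2^2(\mu_0,\pi)\gamma_\epsilon^{-1}\epsilon^{-1}\rceil$) makes the initial-condition term $\le\epsilon/2$ and each error term $\le\epsilon/8$ or so, closing at $KL(\nu_{n_\epsilon}|\pi)\le\epsilon$; the requirement $n_\epsilon\ge 2\tau$ is what lets the sums starting at $k=N+1-\tau$ be reindexed without boundary loss. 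For the $W_2$ statement I would instead keep $\lambda_k = (1-m\gamma_\epsilon)^{-k}$ (geometric weights), which converts the telescoping inequality into a contraction $W_2^2(\mu_0\hat R^{n}_{\gamma_\epsilon},\pi)\le(1-m\gamma_\epsilon)^n W_2^2(\mu_0,\pi) + O(\gamma_\epsilon/m)(\text{error terms})$, and then use $(1-m\gamma_\epsilon)^{n_\epsilon}\le e^{-m\gamma_\epsilon n_\epsilon}\le\epsilon/(4W_2^2(\mu_0,\pi))$ for $n_\epsilon\ge\log(4W_2^2(\mu_0,\pi)/\epsilon)/(m\gamma_\epsilon)$, with the error terms absorbed by the extra factor $m$ in the step-size restriction $\gamma_\epsilon\le m\min\{\dots\}/8$. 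The main obstacle is the delay term in (iii): getting $\mathbb E\|X_k-\hat X_k\|^2 = O(\tau^2\gamma_\epsilon^2 G^2 + \tau\gamma_\epsilon\sigma)$ rigorously requires care because $\nabla U(\hat X_j)$ is itself evaluated at a delayed point, so the recursion must be unrolled and the cross terms between the drift increments and the independent Gaussian increments shown to vanish in expectation (or be bounded), which is where Assumption~\ref{as:delay} and the independence structure of the $G_j$ are essential.
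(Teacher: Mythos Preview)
Your overall scaffolding (constant step $\gamma_k\equiv\gamma_\epsilon$, $\lambda_k\equiv1$, $N=0$, bound $\mathbb E\|X_k-\hat X_k\|^2$ by unrolling $\tau$ steps, then use the contraction factor $(1-m\gamma_\epsilon)$ for the $W_2$ part) matches the paper. The genuine gap is your treatment of the term
\[
(\Lambda_{N,N+n})^{-1}\sum_{k=N+1}^{N+n} W_2^2(\mu_0\hat S^k_\gamma,\pi)
\]
in Theorem~\ref{th:KLseq}. These terms do \emph{not} telescope against $-W_2^2(\mu\hat R_\gamma,\pi)$. In Proposition~\ref{prop:iter} the telescoping pair is $(1-m\gamma)W_2^2(\mu,\pi)-W_2^2(\mu\hat R_\gamma,\pi)$; the extra $+\gamma W_2^2(\mu\hat S_\gamma,\pi)$ is a new error introduced by the delay (it arises because Lemma~\ref{ref:modlem4} gives only $-(1-\gamma)W_2^2(\mu\hat S_\gamma,\nu)$ rather than the full $-W_2^2(\mu S_\gamma,\nu)$ of the undelayed Lemma~4 in~\cite{durmus2019analysis}). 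So after summing you are left with $\gamma\sum_k W_2^2(\mu_0\hat S^k_\gamma,\pi)$, a sum of $n$ nontrivial Wasserstein distances that must be controlled independently.

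The paper handles this by invoking an external result: it views the constant-step delayed scheme as an \emph{inaccurate-gradient} Langevin dynamics with deterministic bias $\|\nabla U(X_k)-\nabla U(\hat X_k)\|\le L\tau(\gamma G+\sqrt{\gamma\sigma})$ and noise variance $2\sigma$, and applies Theorem~4 of~\cite{dalalyan2019user} to obtain
\[
\gamma\sum_{k} W_2^2(\mu_0\hat S^k_\gamma,\pi)\;\le\;\frac{\mathbb E[W_2^2(\mu_0)]}{1-m\gamma}+1.65\,(L/m)\gamma^{3/2}n+\frac{\gamma L\tau(\gamma G+\sqrt{\sigma\gamma})n}{m}+\frac{2\gamma^{3/2}\sigma n}{1.65L+\sqrt{\sigma m}}.
\]
This is precisely where the constants in $\gamma^3$ and $\gamma^4$ (in particular the $1.65(L/m)$, the $\tau L\sqrt{\sigma}/m$, and the $2\sigma/(1.65L+\sqrt{\sigma m})$) come from; your attribution of $\gamma^3,\gamma^4$ to the $\|X_k-\hat X_k\|^2$ expansion alone would not produce those specific expressions. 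Without this external Wasserstein bound (or an equivalent argument), the proof does not close.
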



Comparing these Corollary~\ref{cor:complexity} to~\cite{durmus2019analysis},
it is clear that the presence of delays does not affect
the order of the convergence, however, it contributes
to the scaling of the stepsize and number of required
iterations relative to the size of the maximum delay
$\tau$. Note that the bound on the gradient that is introduced,
which is a new assumption required by the analysis,
only affects a term involving $\epsilon^{-1}$ and thus
is less likely to hurt the convergence for tighter
tolerance requirements. 

\textbf{Inconsistent Read}

Consider now the inconsistent read/write scenario, wherein one processor can be in the middle of reading the vector $x$ in memory while another processor
begins to write an update to the memory. In this case, it is not just that
the entire vector that the gradients are being computed at is delayed, but any given component of the vector used in the computation of the gradient could be delayed by a different quantity of iterations. 

The Assumption~\ref{as:delay} changes to
\begin{assumption}\label{as:delayb}
The delayed iterate vector $\hat X_t$ has a distribution 
$\hat{\mu}_t\in\mathcal{P}_2(\mathbb{R}^d)$ depending on
$\{\mu_s\}_{s\in[t-\tau,t]}$, where $\mu_t$ is the distribution
of $X_t$, such that $[\hat X_t]_i=[X_{s_i}]_i$ for some
$s_i\in[t-\tau,t]$ for all $i$ for a.e. $\hat{\mu}_t$. Furthermore 
assume that there exists a $G>0$ such that $\|\nabla U(x)\| \le G$
for all $x$.
\end{assumption}
where we must add a uniform bound on the potential gradient as well. Now in the derivation of Theorem~\ref{th:KLseq} we have  that now,
\[
\begin{array}{l}
\mathbb{E}\left[\|x_k-\hat x_k\|^2\right]  \le \sum_{l=k-\tau}^k \gamma_l\tau  \mathbb{E}\left[\left\| \gamma^{1/2}_l \nabla U(\hat x_k)+\sqrt{2} G_{k+1}\right\|^2\right] 
\end{array}
\]
and this right hand side cannot be bounded by any particular gradient
evaluation. It is clear then that we are able to obtain the same results as in
Corollary~\ref{cor:complexity}.
\section{Numerical Results}
\subsection{Experiment Setting}
 
\paragraph{Implementation platform.} For variation, we use two different shared-memory settings:
\begin{enumerate}[leftmargin=\parindent,align=left,labelwidth=\parindent,labelsep=0pt,nosep,noitemsep,topsep=0pt]
\item[ \textbf{M1:~}]The first setup is a multi-socket multi-core non-uniform-memory-access (NUMA) workstation based shared-memory system. The workstation packs four Intel(R) Xeon(R) Gold 6150 CPUS running at 2.7 GHz totaling 72 cores (144 logical cores with hyperthreading). It runs on Ubuntu 18.04 Linux. In this system, we perform multiprocessing with $P\in\{18,36,72\}$ processes running concurrently. Due to NUMA, gradient updates over the shared-memory have clear asynchrony and are thereby delayed.  
\item[ \textbf{M2:~}] The second setup is a GPU based shared-memory system. The graphics processor Nvidia GeForce RTX 2080 Ti has 4352 CUDA cores. It allows concurrent launch of CUDA kernels by multiple processes using Nvidia's multi-process service (MPS). MPS utilizes Hyper-Q capabilities of the Turing architecture based GPU. It allocates parallel compute resources -- the streaming multiprocessors (SMs) -- to concurrent CPU-GPU connections based on their availability. Such resource allocation results in asynchrony and thereby delayed gradient updates. It runs on Ubuntu 18.04 Linux. In this setting, we use 
$P\in\{2,4,8\}$ processes running concurrently on the same GPU. 
\end{enumerate}

In both settings, we allow concurrent processes to use all the compute cores simultaneously by launching parallel threads. The load-balancing with respect to thread-level-parallelism (TLP) is offered and managed by the operating system and/or CUDA runtime. Effectively, roughly identical amount of parallelization is obtained even by asynchronous concurrency due to TLP over the shared-memory system. This system-constraint plays an important role in determining the behavior of asynchronous methods which we will describe in the experimental observations below.  We use the open source Pytorch \cite{paszke2017automatic} library to implement the experiments.  

We seed each of the experimental runs, still the asynchronous methods face system-dependent randomization with regards to updates at each iteration. Therefore, we take the average of three runs in the results.
\paragraph{Methods.}
We allow asynchrony in the update scheme in the following two variants:
\begin{enumerate}[label=(\alph*)]
	\item[\textbf{W-Con}] The concurrent processes read the model $X$ using locks. Effectively, in an asynchronous setting, the gradients are delayed yet the model read is consistent. 
	\item[\textbf{W-Icon}] In a different approach we allow both read and update of $X$ to be lock-free and thereby inconsistent.
\end{enumerate}
By contrast, a synchronous update scheme ensures that the read as well as the update of the shared model are consistent: \begin{enumerate}[label=(\alph*)]
	\item[\textbf{Sync}] To impose synchrony we use barriers before the gradient computation and the model is updated by the sum of the gradients computed by the processes. After synchronizing at a barrier, the participating  processes read the current model vector in memory, followed by computing the stochastic gradient and adding noise to it, and then pass the noisy gradients to one of the processes, which we call \textit{updater}. The updater sums the gradients and then updates $X$ with the sum thereof. At this point, the processes again synchronize at the barrier before they read the model to compute the gradient.
\end{enumerate}
Notice that, in \textbf{W-Con} the read lock ensures that a process cannot read a vector while simultaneously it is being updated, i.e., it is not possible for it to compute a gradient at some $X$ whose components are partially updated from $X_{s-\tau_1}$ and others from $X_{s-\tau_2}$, $\tau_1\neq\tau_2$. However, the gradients can be computed from delayed iterates, i.e., because other processors have updated
the vector in the meantime, the gradient computation used for an update
is stale. On the other hand, synchronous parallel computation not only avoids inconsistency due to asynchrony but also has the advantage of effectively reduced variance in stochastic gradients due to, practically, a larger batch update. 


\subsection{Regression}
For the first test case we applied our asynchronous and synchronous variants of SGLD on a regression model. The implementation uses the CPU-based shared-memory setting \textbf{M1}. By implementation, our model is a single linear layer with 4 input features and an output feature implementing a 4$^{th}$ degree polynomial regression. First we randomly determine the actual polynomial coefficients we wish to fit with the regression. Then we run a sequential fully consistent stochastic gradient descent (SGD) scheme to obtain the mode of the posterior distribution $x^*$. Having obtained $x^*$, we seek to minimize the Wasserstein distance of order two of the SGLD iterates $x_t$ from the posterior as defined by $x^*$, the potential and the noise, denoted as $W_2(x_t,x^*)$. We use the library \cite{flamary2017pot} for $W_2(x_t,x^*)$ computation. 
\begin{figure*}[t]
	\centering
	\subfigure[$\sigma=0.1, P=18$.]{%
		\includegraphics[width=0.32\textwidth]{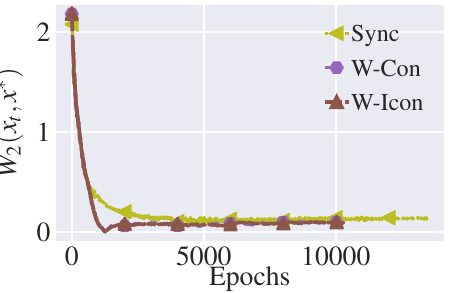}}
	\hfill
	\subfigure[$\sigma=0.1, P=18$.]{%
		\includegraphics[width=0.32\textwidth]{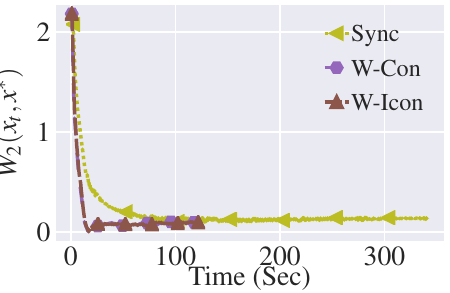}}
	\hfill
	\subfigure[$\sigma=0.1, P=18$.]{%
		\includegraphics[width=0.32\textwidth]{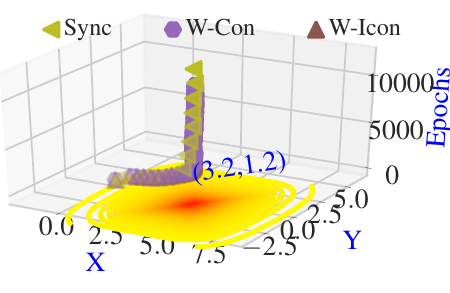}}
	\caption{\small Optimization trajectory of the SGLD iterates are shown for training with 18 processes. Sub-figure (a) shows the convergence with respect to iterations (epochs), whereas, sub-figure (b) presents relative speed-up of the methods. In sub-figure (c), we depict the trajectory of iterates by tracking their first two co-ordinates. Sub-figure (c) also presents the contour plot of the potential $\exp(-U(x))$ near $x^*$ on the plane $Epoch=0$.}
	
	\label{fig:reg-01-18}
\end{figure*}

\begin{figure*}[t]
	\centering
	\subfigure[$\sigma=0.1, P=36$.]{%
		\includegraphics[width=0.32\textwidth]{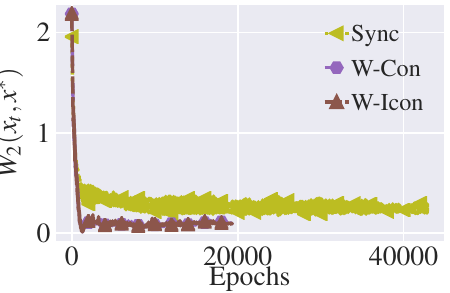}}
	\hfill
	\subfigure[$\sigma=0.1, P=36$.]{%
		\includegraphics[width=0.32\textwidth]{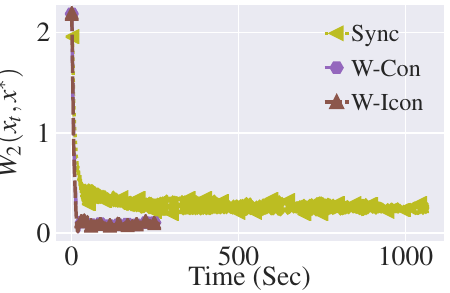}}
	\hfill
	\subfigure[$\sigma=0.1, P=36$.]{%
		\includegraphics[width=0.32\textwidth]{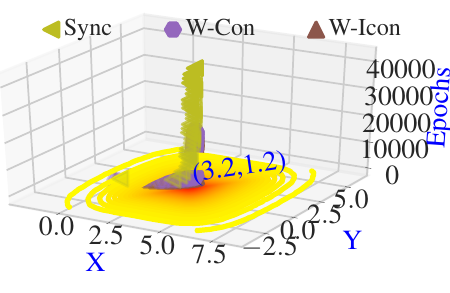}}
	\caption{\small Optimization trajectory for training with 36 processes. Other descriptions are as those of Figure \ref{fig:reg-01-18}.}
	
	\label{fig:reg-01-36}
\end{figure*}

\begin{figure*}[t]
	\centering
	\subfigure[$\sigma=0.1, P=72$.]{%
		\includegraphics[width=0.32\textwidth]{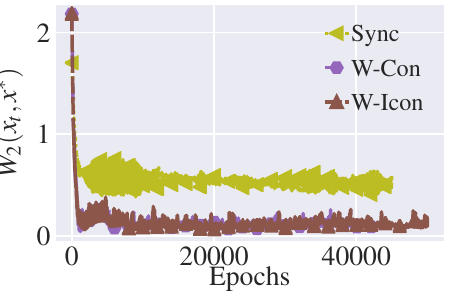}}
	\hfill
	\subfigure[$\sigma=0.1, P=72$.]{%
		\includegraphics[width=0.32\textwidth]{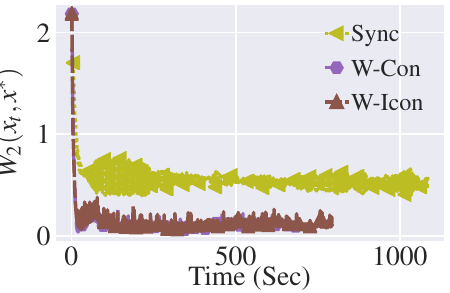}}
	\hfill
	\subfigure[$\sigma=0.1, P=72$.]{%
		\includegraphics[width=0.32\textwidth]{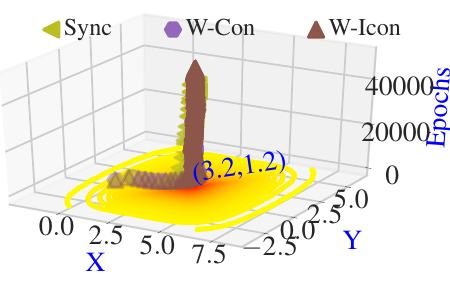}}
	\caption{\small Optimization trajectory for training with 72 processes. Other descriptions are as those of Figure \ref{fig:reg-01-18}.}
	
	\label{fig:reg-01-72}
\end{figure*}

In the first set of experiments, we take $\nu_i{\sim}N(0,0.1)$. We run the iterative scheme for 50000 iterations and let it stop if the minimization of $W_2(x_t,x^*)$ plateaus for 500 consecutive iterations. We use an identical constant learning rate of 0.01, use a batch of 100000 data samples to compute the initial stochastic gradient (note that the data set is essentially infinite as we can compute the output of the original actual polynomial to regress upon), and then add the additional $N(0,0.1)$ noise at each iteration. The convergence behavior of SGLD iterates are presented in Figures \ref{fig:reg-01-18}, \ref{fig:reg-01-36}, and \ref{fig:reg-01-72}. 

It is evident from the experimental observations that both the variants of asynchronous methods outperform the synchronous scheme with respect to both convergence, as seen in Figures \ref{fig:reg-01-18}(a), \ref{fig:reg-01-36}(a), \ref{fig:reg-01-72}(a) and speedup, see Figures \ref{fig:reg-01-18}(b), \ref{fig:reg-01-36}(b), \ref{fig:reg-01-72}(b). It is interesting to observe that both \textbf{W-Con} and \textbf{W-Icon} display better convergence in comparison to \textbf{Sync} updates despite of the fact that \textbf{Sync} faces reduced variance of stochastic gradients. Notice that as we increase the number of processes from 18 to 72, thereby increasing the batch-size for \textbf{Sync}, asynchronous methods perform even better. This observation corroborates the reduced competitiveness of large batch training without reducing the learning rate as noticed in \cite{WilsonM00}. In experiments we observed that on reducing the learning rate optimization of \textbf{Sync} improves by way of running for a much larger number of iterations, however, that also reveals the comparative efficacy of the small batch asynchronous methods. 

To explore the effect of noise, we performed another set of experiments with $\nu_i{\sim}N(0,1)$. The hyperparameters -- learning rate, batch-size, etc. -- are identical to those used before. The numerical results for training with 72 processes are presented in Figure \ref{fig:reg-1-72}; and results for other asynchrony cases are available in Appendix \ref{sec:more_exp}. We observe that the relative behavior of \textbf{Sync} in comparison to asynchronous methods is similar as before, however, with increased noise in the stochastic gradients, the read consistent method \textbf{W-Con} exhibits better convergence compared to \textbf{W-Icon}. 

\begin{figure*}[t]
	\centering
	\subfigure[$\sigma=1.0, P=72$.]{%
		\includegraphics[width=0.32\textwidth]{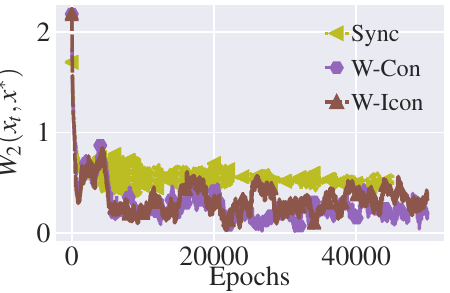}}
	\hfill
	\subfigure[$\sigma=1.0, P=72$.]{%
		\includegraphics[width=0.32\textwidth]{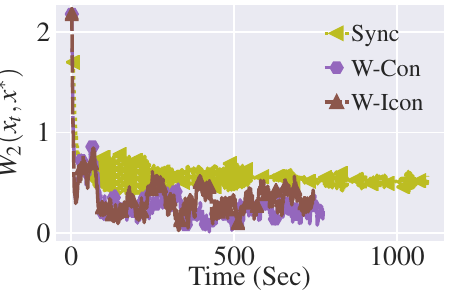}}
	\hfill
	\subfigure[$\sigma=1.0, P=72$.]{%
		\includegraphics[width=0.32\textwidth]{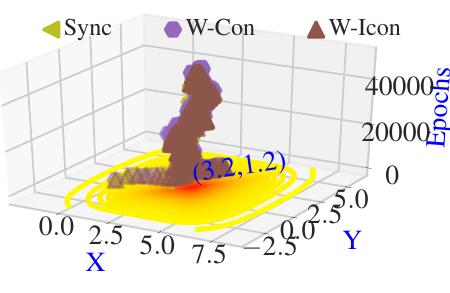}}
	\caption{\small Optimization trajectory for 72 processes and $\nu_i{\sim}N(0,1)$. Other descriptions are as those of Figure \ref{fig:reg-01-18}.}
	
	\label{fig:reg-1-72}
\end{figure*}

\subsection{Reconstruction Independent Component Analysis (RICA)}
For the second test case we considered reconstruction ICA \cite{LeKNN11}. 
The objective is defined as,
\[
\min\limits_{W} \lambda\|Wx\|_1+\frac 12\|W^T Wx-x\|^2_2
\]
where $W$ are the features and $x$ is the input vector. The first term penalizes for sparsity of the hidden representation $Wx$ and the second is the Reconstruction ICA approach to relaxing the orthonormal constraint in regular ICA to a penalty term. We used $\lambda=0.4$ in our experiments. The images from the well-known CIFAR10 dataset \cite{krizhevsky2009learning} are utilized for RICA. The experiments are implemented over the GPU-based shared-memory setting \textbf{M2}. We used an identical constant learning rate of $0.002$ and a batch size of 1000 across the runs; the results are shown in Figures \ref{fig:ica-01-2}, \ref{fig:ica-01-4}, and \ref{fig:ica-01-8} for $\nu_i{\sim}N(0,0.01)$. 

We observe that the relative experimental behavior of the methods over a GPU, where concurrency and resultant asynchrony thereof are too constrained, is different from that observed on setting \textbf{M1}. The asynchronous methods, though doing better in terms of speedup, under-perform the synchronous update scheme as we increase asynchrony from $P=2$ to $P=8$. It displays the impact of higher inconsistency due to delayed gradient updates. As CUDA runtime allows roughly identical parallelization to the concurrent processes, their speed-up performance are similar. To investigate the impact of noise in the updates, we conducted another set of experiments with $\nu_i{\sim}N(0,0.0001)$, see Figure \ref{fig:ica-001-8}. Both the synchronous and asynchronous methods exhibit roughly identical convergence trajectory with respect to iterations, while the asynchronous ones obtain similar speed-up as before. Additional experimental results and descriptions thereof are available in Appendix \ref{sec:more_exp}. 

	\begin{figure*}[t]
		\centering
		\subfigure[$\sigma=0.01, P=2$.]{%
			\includegraphics[width=0.32\textwidth]{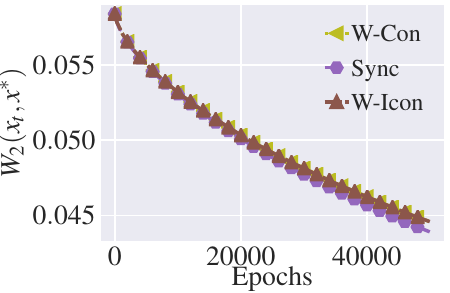}}
		\hfill
		\subfigure[$\sigma=0.01, P=2$.]{%
			\includegraphics[width=0.32\textwidth]{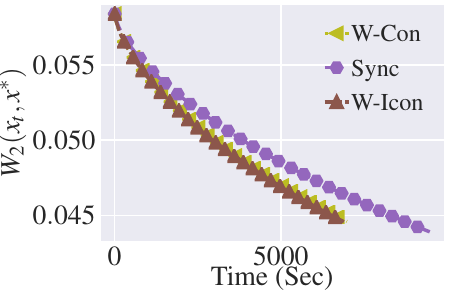}}
		\hfill
		\subfigure[$\sigma=0.01, P=2$.]{%
			\includegraphics[width=0.32\textwidth]{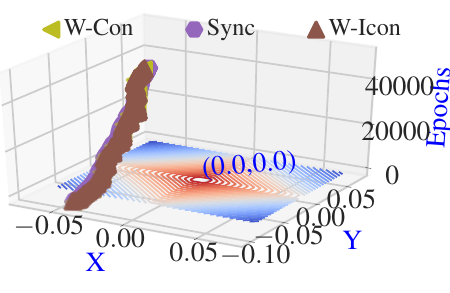}}
		\caption{\small Optimization trajectory of the SGLD iterates for RICA are shown for training with 2 concurrent processes over a GPU. The description of sub-figures are as those of Figure \ref{fig:reg-01-18}.}
		
		\label{fig:ica-01-2}
	\end{figure*}

	\begin{figure*}[t]
		\centering
		\subfigure[$\sigma=0.01, P=4$.]{%
			\includegraphics[width=0.32\textwidth]{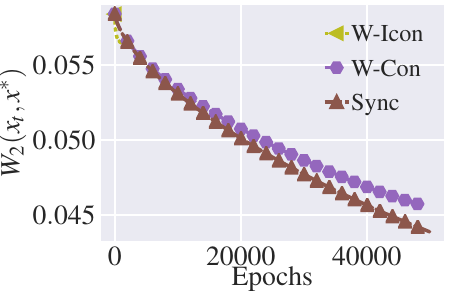}}
		\hfill
		\subfigure[$\sigma=0.01, P=4$.]{%
			\includegraphics[width=0.32\textwidth]{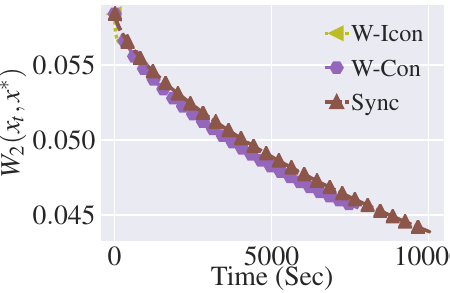}}
		\hfill
		\subfigure[$\sigma=0.01, P=4$.]{%
			\includegraphics[width=0.32\textwidth]{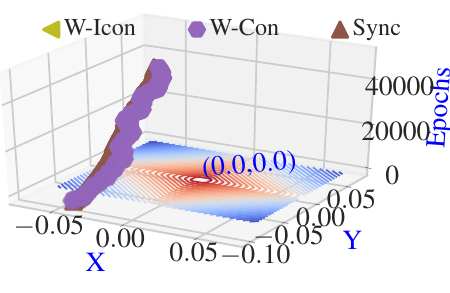}}
		\caption{\small  Trajectory of the distance of SGLD iterates from the 
			optimal of SGLD.}
		
		\label{fig:ica-01-4}
	\end{figure*}
	
	\begin{figure*}[t]
		\centering
		\subfigure[$\sigma=0.01, P=8$.]{%
			\includegraphics[width=0.32\textwidth]{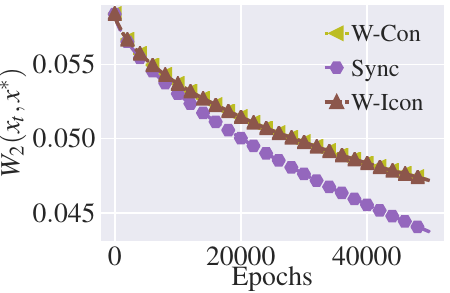}}
		\hfill
		\subfigure[$\sigma=0.01, P=8$.]{%
			\includegraphics[width=0.32\textwidth]{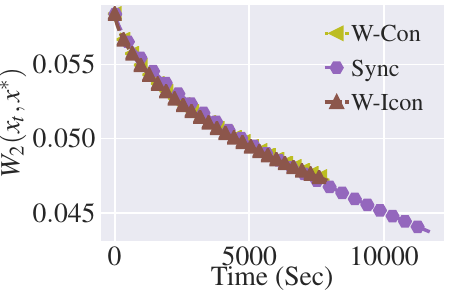}}
		\hfill
		\subfigure[$\sigma=0.01, P=8$.]{%
			\includegraphics[width=0.32\textwidth]{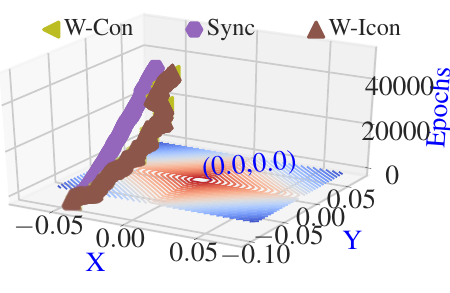}}
		\caption{\small  Trajectory of the distance of SGLD iterates from the 
			optimal of SGLD.}
		
		\label{fig:ica-01-8}
	\end{figure*}

	\begin{figure*}[t]
		\centering
		\subfigure[$\sigma=0.0001, P=8$.]{%
			\includegraphics[width=0.32\textwidth]{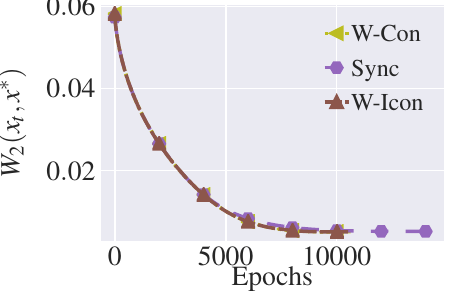}}
		\hfill
		\subfigure[$\sigma=0.0001, P=8$.]{%
			\includegraphics[width=0.32\textwidth]{results/ica_results/NoiseE-2/Pr8/disttime}}
		\hfill
		\subfigure[$\sigma=0.0001, P=8$.]{%
			\includegraphics[width=0.32\textwidth]{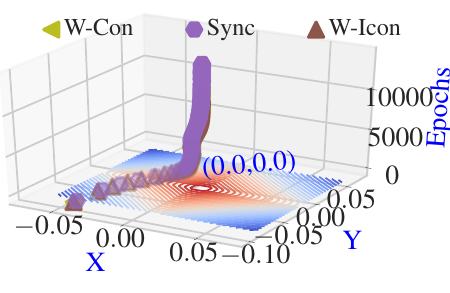}}
		\caption{\small Optimization trajectory for 8 processes, $\nu_i{\sim}N(0,10^{-4})$. Other descriptions are as those of Figure \ref{fig:ica-01-2}.}
		
		\label{fig:ica-001-8}
\end{figure*}

\section{Conclusion}
In this paper we analyzed the theoretical and numerical properties of SGLD with gradients computed at delayed iterates. We showed theoretically and numerically that the distributions of iterates approached the stationary
distribution at a reasonable rate, i.e., of the same order of magnitude of iterations as the synchronous version, suggesting the potential of speedup in wall clock time.

\bibliographystyle{plain}
\bibliography{refs}
\newpage
\appendix
\section{Convergence Proofs}\label{sec:converge}
\subsection{Proof of Lemma 2.2}
\[
\begin{array}{l}
U(x-\gamma \nabla U(\hat x))-U(y)  = U(x-\gamma \nabla U(x))-U(x-\gamma \nabla U(x)) +U(x-\gamma \nabla U(\hat x))-U(y) \\
\qquad \le -\gamma(1-\gamma L/2)\|\nabla U(x)\|^2+\langle \nabla U(x),x-y\rangle  - (m/2)\|y-x\|^2 
\\ \qquad \qquad +\gamma \langle \nabla U(x),\nabla U(x)-\nabla U(\hat x)\rangle+\gamma\left\langle \nabla U(x-\gamma \nabla U(x))-\nabla U(x),\nabla U(x)-\nabla U(\hat x)\right\rangle  \\ \qquad \qquad+\frac{\gamma^2 L^2}{2}\left\|\nabla U(x)-\nabla U(\hat x)\right\|^2\\ 
\qquad \le -\gamma(1-\gamma L/2)\|\nabla U(x)\|^2+\langle \nabla U(x),x-y\rangle - (m/2)\|y-x\|^2+\frac{\gamma}{2}\|\nabla U(x)\|^2\\ \qquad\qquad\qquad +\frac{\gamma L^2}{2}\|x-\hat x\|^2 
 +\gamma^2 L^2\|\nabla U(x)\|\|x-\hat x\| +\frac{\gamma^2 L^4}{2} \|x-\hat x\|^2
\end{array}
\]
Using Young's inequality on the second to last term and multiplying
by $2\gamma$ we have,
\[
\begin{array}{l}
2\gamma[ U(x-\gamma \nabla U(\hat x))-U(y) ]
\\ \qquad \le (1-m\gamma) \|x-y\|^2-\|x-\gamma \nabla U(x)-y\|^2
 +\gamma^3(L+L^2)\|\nabla U(x)\|^2  \\ \qquad\qquad +\gamma^2 (L^2+\gamma (L^2+L^4))\|x-\hat x\|^2
\\ \qquad \le (1-m\gamma) \|x-y\|^2-(1-\gamma)\|x-\gamma \nabla U(\hat x)-y\|^2
 +\gamma^3(L+L^2)\|\nabla U(x)\|^2 \\ \qquad\qquad +\gamma^2 (L^2+\gamma (L^2+L^4))\|x-\hat x\|^2
 +\left(1+\frac{1}{4\gamma}\right)\|\gamma \nabla U(x)-\gamma \nabla U(\hat x)\|^2
\\ \qquad \le (1-m\gamma) \|x-y\|^2-(1-\gamma)\|x-\gamma \nabla U(\hat x)-y\|^2
 +\gamma^3(L+L^2)\|\nabla U(x)\|^2 \\ \qquad\qquad +\gamma \left(\frac{L^2}{4} + \gamma (2L^2+\gamma (L^2+L^4))\right)\|x-\hat x\|^2
\end{array}
\]
Let $(X,Y)$ be an optimal coupling between $\mu$ and $\nu$. Now 
take expectations with respect to $\mu$ and we have,
\[
\begin{array}{l}
2\gamma[ \mathbb{E}_{d\hat{\mu}}[\mathcal{E}(\mu \hat S_\gamma)]-\mathcal{E}(\nu)] 
 \le (1-m\gamma) W_2^2(\mu,\nu)-(1-\gamma)\mathbb{E}\|X-\gamma \nabla U(\hat X)-Y\|^2
\\ \qquad \qquad +\gamma^3 (L+L^2)\mathbb{E}\|\nabla U(X)\|^2  +\gamma \left(\frac{L^2}{4}+\gamma (2L^2+\gamma (L^2+L^4))\right)\mathbb{E}\|X-\hat X\|^2
\end{array}
\]
Using $W^2_2(\mu \hat S_\gamma,\nu)\le 
\mathbb{E}_{d\mu}[\|X-\gamma \nabla U(\hat X)-Y\|^2]$ concludes the proof.
\qedsymbol

\subsection{Proof of Theorem 2.1}
Applying Lemma 1b in~\cite{durmus2019analysis}, the convexity of the KL divergence and Proposition 2.1
we get
\[
\begin{array}{l}
KL(\nu^N_n |\pi)  \le
\Lambda^{-1}_{N,N+n}  
\sum\limits_{k=N+1}^{N+n} \lambda_k KL (\mu_0 \hat Q^k_\gamma \mid \pi) \\\qquad
\le (2\Lambda_{N,N+n})^{-1}\left[ \frac{(1-m\gamma_{N+1})\lambda_{N+1}}{\gamma_{N+1}} W^2_2(\mu_0
\hat Q^N_\gamma,\pi)
 -\frac{\lambda_{N+n}}{\gamma_{N+n}}  W^2_2(\mu_0 \hat Q_\gamma^{N+n},\pi)
\right. \\ 
\qquad\qquad+\sum\limits_{k=N+1}^{N+n-1} \left\{
\frac{(1-m\gamma_{k+1})\lambda_{k+1}}{\gamma_{k+1}}-\frac{\lambda_k}{\gamma_k}\right\} W^2_2 (\mu_0 \hat Q^k_\gamma,\pi)
+\sum\limits_{k=N+1}^{N+n} L d \lambda_k\gamma_k 
\\ \qquad\qquad+ \sum\limits_{k=N+1}^{N+n} \lambda_k\gamma_k 
W^2_2 (\mu_0\hat S^k_\gamma,\pi)+\frac 12\sum\limits_{k=N+1}^{N+n}\lambda_k\gamma^2_k(L+L^2)  \int_{\mathbb{R}^d} \|\nabla U(x)\|^2 d\mu_k(x) \\ \qquad\qquad\left. +\frac 12\sum\limits_{k=N+1}^{N+n}\lambda_k\left(\frac{L^2}{4}+2\gamma_k L^2+\gamma^2_k (L^2+L^4)\right)\int_{\mathbb{R}^d}\|x-\hat x\|^2d\mu_k(x)\right]
\end{array}
\]

Now, with $\tau$ the maximum delay, we can write,
\[
\begin{array}{l}
\mathbb{E}\left[\|x_k-\hat x_k\|^2\right]  \le \sum_{l=k-\tau}^k \gamma_l\tau  \mathbb{E}\left[\left\| \gamma^{1/2}_l \nabla U(\hat x_k)+\sqrt{2} G_{k+1}\right\|^2\right] \\ \qquad \le \sum_{l=k-\tau}^k \left[2\gamma^2_l \tau \max\limits_{s\in\{l-\tau\}}\mathbb{E}\left\|\nabla U(x_s)\right\|^2+4\gamma_l\tau\sigma \right]
 \\ \qquad \le \sum_{l=k-\tau}^k\left[2\gamma^2_l \tau \sum\limits_{s=l-\tau}^l
\mathbb{E}\left\|\nabla U(x_s)\right\|^2\right]+4\gamma_{k-\tau}\tau^2\sigma
\end{array}
\]
and thus,
\[
\begin{array}{l}
\sum\limits_{k=N+1}^{N+n}\lambda_k\left(\frac{L^2}{4}+2\gamma_k L^2+\gamma^2_k (L^2+L^4)\right)\int_{\mathbb{R}^d}\|x-\hat x\|^2d\mu_k(x)
\\ \qquad \le
\sum\limits_{k=N+1-\tau}^{N+n} 2\lambda_k\gamma_k
\tau^2 \left(\frac{L^2}{4}+2\gamma_k L^2+\gamma^2_k (L^2+L^4)\right) \left[\gamma_k\int_{\mathbb{R}^d}
\|\nabla U(x)\|^2 d\mu_k(x) +2\sigma\right]
\end{array}
\]
from which the final result follows.
\qedsymbol

\subsection{Proof of Corrolary 2.1}
Assumption 2.2 and following the proof of Theorem 2.1
while setting $\lambda_k=1$ gives,

\[
\begin{array}{l}
n KL(\nu^N_n |\pi)+ W^2_2(\mu_0 \hat Q^{N+n}_\gamma,\pi)/(2\gamma)   \le (1-m\gamma)W^2_2(\mu_0 Q^N_\gamma,\pi)/(2\gamma)
\\ \qquad + Ldn\gamma +2\left(\frac{L^2}{4}+2\gamma L^2+\gamma^2 (L^2+L^4)\right)\tau^2\sigma (n+\tau)\gamma 
\\ \qquad + \gamma^2\left[\frac{L}{2}+\frac{L^2}{2}+\tau^2\left(\frac{L^2}{4}+2\gamma L^2+\gamma^2 (L^2+L^4)\right)\right] G^2(n+\tau)  + \gamma \sum\limits_{k=N+1}^{N+n}
W^2_2 (\mu_0 \hat S^k_\gamma,\pi)
\end{array}
\]

Now we use~\cite{dalalyan2019user} to derive a bound on
the last term, considering the constant step-size
stochastic gradient Langevin with delayed gradients
as an innacurate biased gradient version of Langevin dynamics, as considered in that paper, and defining the convergence in Wasserstein norm.

In particular, using Theorem 4 in~\cite{dalalyan2019user} with
the gradient inaccuracy bias,
\[
\|\nabla U (X_k)-\nabla U(\hat X_k)\| \le  L\tau(\gamma G+\sqrt{\gamma\sigma})
\]
and variance as simply $2\sigma$,
we get that,
\[
\begin{array}{l}
\gamma \sum\limits_{k=N+1}^{N+n}
W^2_2 (\mu_0 \hat S^k_\gamma,\pi)\le 
\frac{\mathbb{E}[W^2_2(\mu_0)]}{1-m\gamma}
+1.65 (L/m) \gamma^{3/2}n
+ \frac{\gamma L\tau(\gamma G+\sqrt{\sigma\gamma})n}{m}
+\frac{2\gamma^{3/2}\sigma n}{1.65L+\sqrt{\sigma}\sqrt{m}} 
\end{array}
\]

The rest of the proof follows from setting $N=1$, 
the choice of $\gamma_\epsilon$
and the fact that both the KL divergence and Wasserstein
norms are positive.
\qedsymbol

\section{Additional Numerical Experiments}\label{sec:more_exp}
In this section we present some additional numerical results. Firstly, the remaining results from Section 3.3. see Figures \ref{fig:reg-1-18}, \ref{fig:reg-1-36}, \ref{fig:ica-0001-2} and \ref{fig:ica-0001-4}. Essentially, we observe that for a higher noise, in Figures \ref{fig:reg-1-18} and \ref{fig:reg-1-36}, the convergence behavior of synchronous and asynchronous methods become more similar for relatively lower asynchrony: compare it against Figure 4 in Section 3 in the paper. On the other hand, we observe that in comparison to the CPU-based asynchrony, the GPU-based asynchrony does not cause different behavior in terms of per epoch convergence for the asynchronous and synchronous methods as the noise is reduced. However, in terms of speed-up, the asynchronous methods are still faster compared to the synchronous method, see \ref{fig:ica-0001-2} and \ref{fig:ica-0001-4}. 
\begin{figure*}[t]
	\centering
	\subfigure[$\sigma=1.0, P=18$.]{%
		\includegraphics[width=0.32\textwidth]{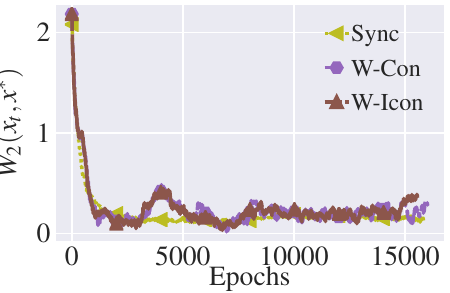}}
	\hfill
	\subfigure[$\sigma=1.0, P=18$.]{%
		\includegraphics[width=0.32\textwidth]{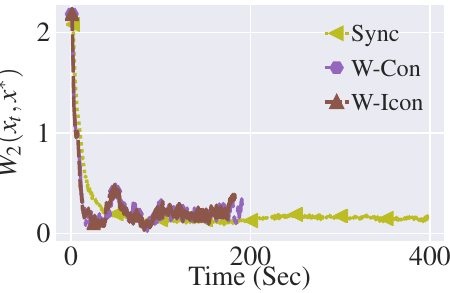}}
	\hfill
	\subfigure[$\sigma=1.0, P=18$.]{%
		\includegraphics[width=0.32\textwidth]{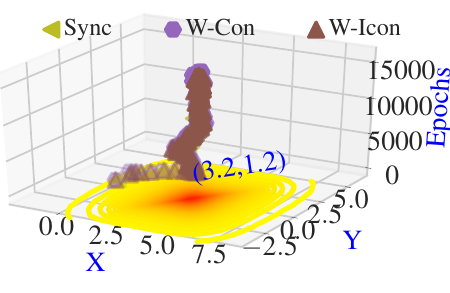}}
	\caption{\small Optimization trajectory of the SGLD iterates for regression experiment are shown for training with 18 processes on CPU and $\nu_i{\sim}N(0,1)$. Sub-figure (a) shows the convergence with respect to iterations (epochs), whereas, sub-figure (b) presents relative speed-up of the methods. In sub-figure (c), we depict the trajectory of iterates by tracking their first two co-ordinates. Sub-figure (c) also presents the contour plot of the potential $\exp(-U(x))$ near $x^*$ on the plane $Epoch=0$.}
	
	\label{fig:reg-1-18}
\end{figure*}
\begin{figure*}[t]
	\centering
	\subfigure[$\sigma=1.0, P=36$.]{%
		\includegraphics[width=0.32\textwidth]{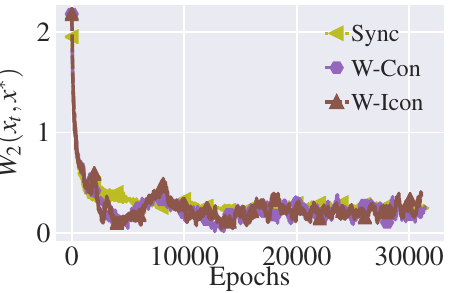}}
	\hfill
	\subfigure[$\sigma=1.0, P=36$.]{%
		\includegraphics[width=0.32\textwidth]{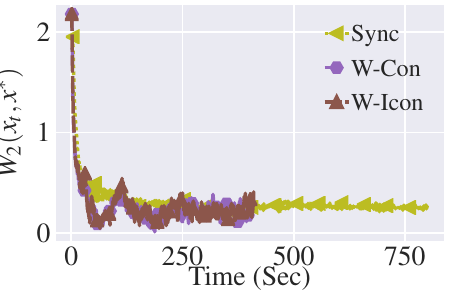}}
	\hfill
	\subfigure[$\sigma=1.0, P=36$.]{%
		\includegraphics[width=0.32\textwidth]{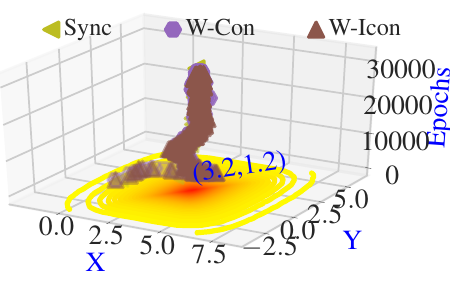}}
	\caption{\small Optimization trajectory of the SGLD iterates for regression experiment with 36 processes  on CPU and $\nu_i{\sim}N(0,1)$. Other descriptions are  the same as those of Figure \ref{fig:reg-1-18}.}
	
	\label{fig:reg-1-36}
\end{figure*}

\begin{figure*}[t]
	\centering
	\subfigure[$\sigma=0.0001, P=2$.]{%
		\includegraphics[width=0.32\textwidth]{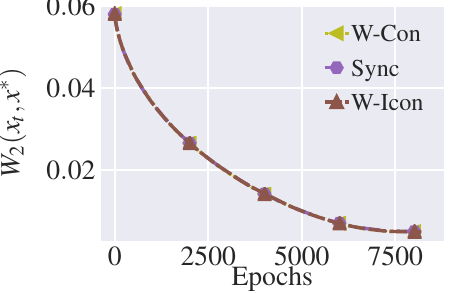}}
	\hfill
	\subfigure[$\sigma=0.0001, P=2$.]{%
		\includegraphics[width=0.32\textwidth]{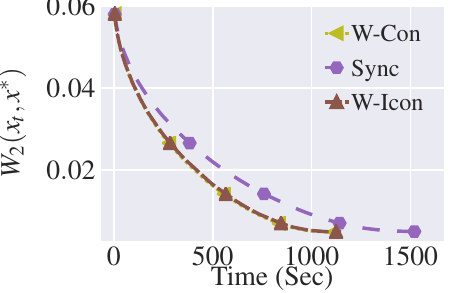}}
	\hfill
	\subfigure[$\sigma=0.0001, P=2$.]{%
		\includegraphics[width=0.32\textwidth]{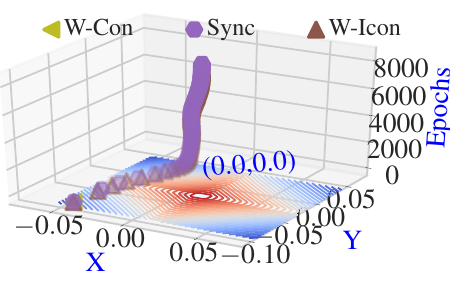}}
	\caption{\small Optimization trajectory of the SGLD iterates for RICA experiment with 2 concurrent processes on GPU and $\nu_i{\sim}N(0,0.0001)$. Other descriptions are  the same as those of Figure \ref{fig:reg-1-18}.}
	
	\label{fig:ica-0001-2}
\end{figure*}

\begin{figure*}[t]
	\centering
	\subfigure[$\sigma=0.0001, P=4$.]{%
		\includegraphics[width=0.32\textwidth]{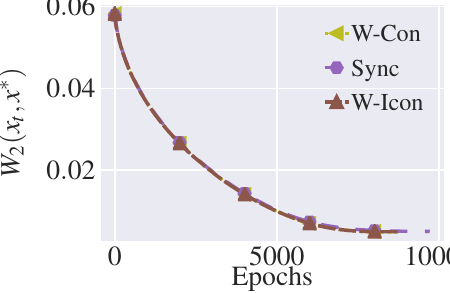}}
	\hfill
	\subfigure[$\sigma=0.0001, P=4$.]{%
		\includegraphics[width=0.32\textwidth]{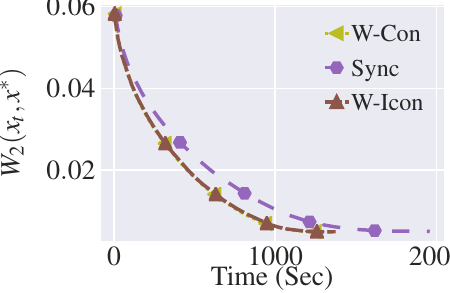}}
	\hfill
	\subfigure[$\sigma=0.0001, P=4$.]{%
		\includegraphics[width=0.32\textwidth]{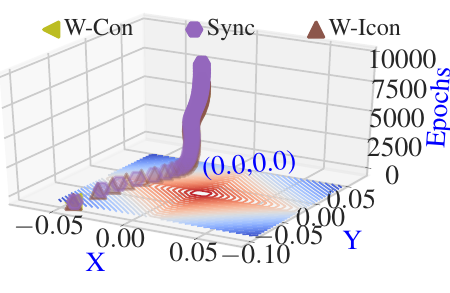}}
	\caption{\small Optimization trajectory of the SGLD iterates for RICA experiment with 4 concurrent processes on GPU and $\nu_i{\sim}N(0,0.0001)$. Other descriptions are the same as those of Figure \ref{fig:reg-1-18}.}
	
	\label{fig:ica-0001-4}
\end{figure*}

Further, we present the numerical results with respect to norm distance between iterates and the posterior. The experiments of regression are presented in Figures \ref{fig:reg-norm-1-18} to \ref{fig:ica-norm-01-8}. 
\begin{figure*}[t]
	\centering
	\subfigure[$\sigma=1.0, P=18$.]{%
		\includegraphics[width=0.32\textwidth]{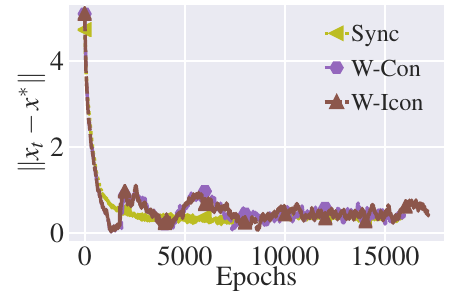}}
	\hfill
	\subfigure[$\sigma=1.0, P=18$.]{%
		\includegraphics[width=0.32\textwidth]{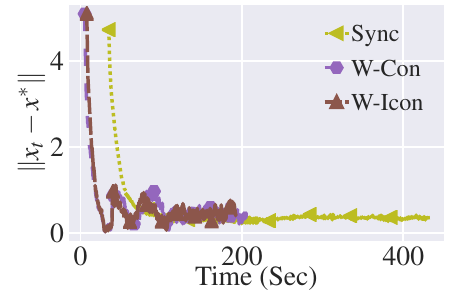}}
	\hfill
	\subfigure[$\sigma=1.0, P=18$.]{%
		\includegraphics[width=0.32\textwidth]{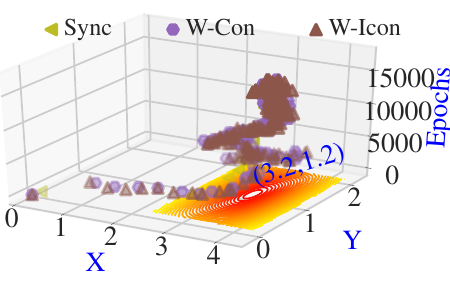}}
	\caption{\small Optimization trajectory of the SGLD iterates for regression experiment with 18 processes  on CPU and $\nu_i{\sim}N(0,1)$. Other descriptions are  the same as those of Figure \ref{fig:reg-1-18}.}
	
	\label{fig:reg-norm-1-18}
\end{figure*}

\begin{figure*}[t]
	\centering
	\subfigure[$\sigma=1.0, P=36$.]{%
		\includegraphics[width=0.32\textwidth]{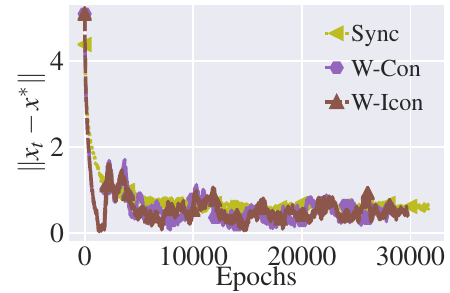}}
	\hfill
	\subfigure[$\sigma=1.0, P=36$.]{%
		\includegraphics[width=0.32\textwidth]{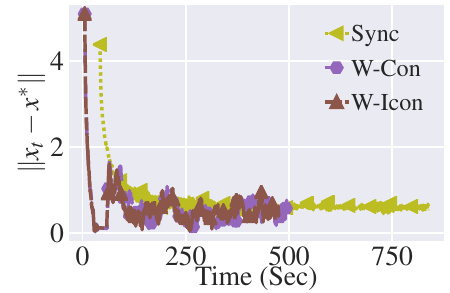}}
	\hfill
	\subfigure[$\sigma=1.0, P=36$.]{%
		\includegraphics[width=0.32\textwidth]{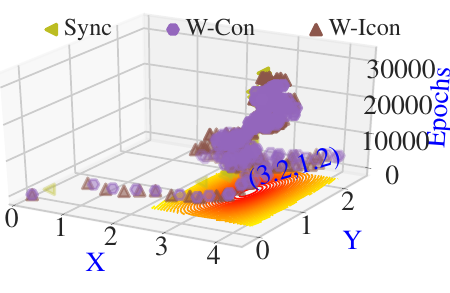}}
	\caption{\small Optimization trajectory of the SGLD iterates for regression experiment with 36 processes  on CPU and $\nu_i{\sim}N(0,1)$. Other descriptions are  the same as those of Figure \ref{fig:reg-1-18}.}
	
	\label{fig:reg-norm-1-36}
\end{figure*}

\begin{figure*}[t]
	\centering
	\subfigure[$\sigma=1.0, P=72$.]{%
		\includegraphics[width=0.32\textwidth]{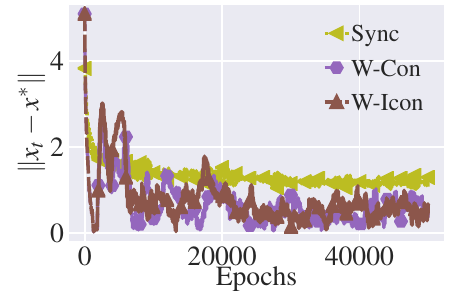}}
	\hfill
	\subfigure[$\sigma=1.0, P=72$.]{%
		\includegraphics[width=0.32\textwidth]{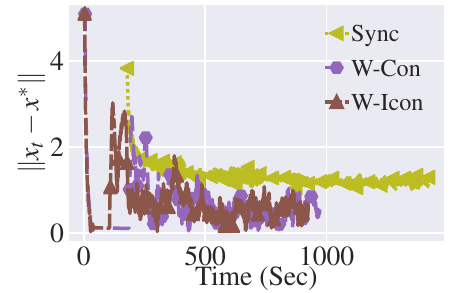}}
	\hfill
	\subfigure[$\sigma=1.0, P=72$.]{%
		\includegraphics[width=0.32\textwidth]{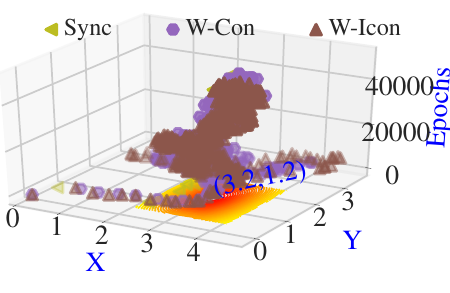}}
	\caption{\small Optimization trajectory of the SGLD iterates for regression experiment with 72 processes  on CPU and $\nu_i{\sim}N(0,1)$. Other descriptions are  the same as those of Figure \ref{fig:reg-1-18}. Notice that with higher asynchrony the convergence of large batch synchronous method gets poorer.}
	
	\label{fig:reg-norm-1-72}
\end{figure*}

\begin{figure*}[t]
	\centering
	\subfigure[$\sigma=1.0, P=72$.]{%
		\includegraphics[width=0.32\textwidth]{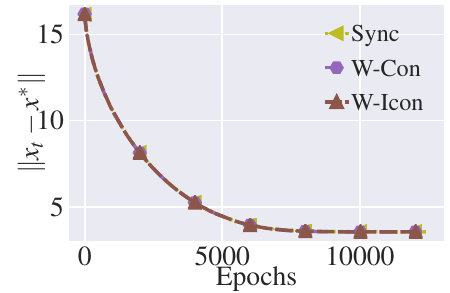}}
	\hfill
	\subfigure[$\sigma=1.0, P=72$.]{%
		\includegraphics[width=0.32\textwidth]{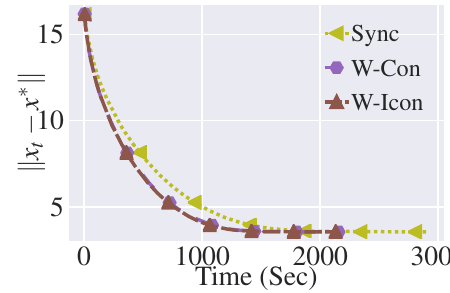}}
	\hfill
	\subfigure[$\sigma=1.0, P=72$.]{%
		\includegraphics[width=0.32\textwidth]{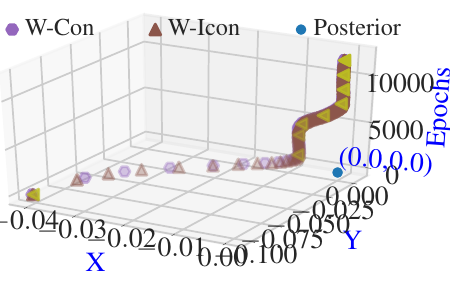}}
	\caption{\small Optimization trajectory of the SGLD iterates for RICA experiment with 2 processes on GPU and $\nu_i{\sim}N(0,0.01)$. Other descriptions are  the same as those of Figure \ref{fig:reg-1-18}.}
	
	\label{fig:ica-norm-1-2}
\end{figure*}

\begin{figure*}[t]
	\centering
	\subfigure[$\sigma=0.01, P=8$.]{%
		\includegraphics[width=0.32\textwidth]{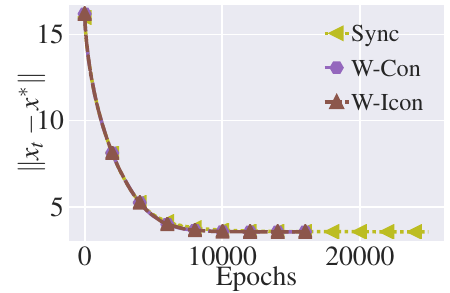}}
	\hfill
	\subfigure[$\sigma=0.01, P=8$.]{%
		\includegraphics[width=0.32\textwidth]{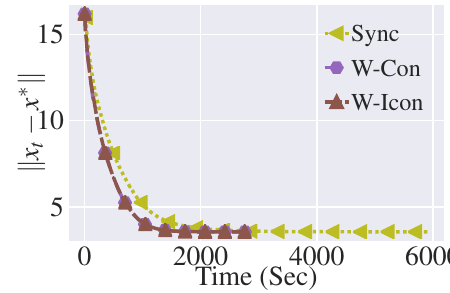}}
	\hfill
	\subfigure[$\sigma=0.01, P=8$.]{%
		\includegraphics[width=0.32\textwidth]{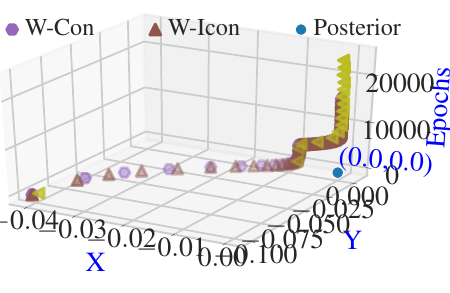}}
	\caption{\small Optimization trajectory of the SGLD iterates for RICA experiment with 8 processes on GPU and $\nu_i{\sim}N(0,0.01)$. Other descriptions are  the same as those of Figure \ref{fig:reg-1-18}.}
	
	\label{fig:ica-norm-01-8}
\end{figure*}

\end{document}